\documentclass[lettersize,journal]{IEEEtran}
\usepackage{amsmath,amsfonts}
\usepackage{algorithmic}
\usepackage{algorithm}
\usepackage{array}
\usepackage[caption=false,font=normalsize,labelfont=sf,textfont=sf]{subfig}
\usepackage{textcomp}
\usepackage{stfloats}
\usepackage{url}
\usepackage{verbatim}
\usepackage{graphicx}
\usepackage{cite}
\usepackage{amsthm} 
\newtheorem{assumption}{Assumption}
\newtheorem{definition}{Definition}
\newtheorem{theorem}{Theorem}
\newtheorem{lemma}{Lemma}
\usepackage{multirow}
\usepackage{booktabs}
\usepackage{xcolor}
\usepackage{enumitem}

\begin{document}

\title{Explanation-Guided Adversarial Training for Robust and Interpretable Models}


\author{Chao Chen, Yanhui Chen, Shanshan Lin, Dongsheng Hong, Shu Wu,~\IEEEmembership{Senior Member,~IEEE,} Xiangwen Liao, Chuanyi Liu
\thanks{Chao Chen and Chuanyi Liu are with Harbin Institute of Technology (Shenzhen). 
Yanhui Chen, Shanshan Lin, Dongsheng Hong, and Xiangwen Liao are with Fuzhou University.
Shu Wu is with NLPR, MAIS, Institute of Automation, Chinese Academy of Sciences.}
\thanks{Chao Chen and Chuanyi Liu were supported by the National Key Research and Development Program of China under Grant 2023YFB3106504.
Chao Chen, Yanhui Chen, Shanshan Lin, Dongsheng Hong, and Xiangwen Liao were supported by National Natural Science Foundation of China under Grant 62476060.
Shu Wu was supported by National Natural Science Foundation of China under Grant 62372454.}
}

\markboth{Journal of \LaTeX\ Class Files,~Vol.~14, No.~8, August~2021}%
{Shell \MakeLowercase{\textit{et al.}}: A Sample Article Using IEEEtran.cls for IEEE Journals}

\IEEEpubid{\footnotesize
Copyright \copyright\ 2026 IEEE. Personal use of this material is permitted.
However, permission to use this material for any other purposes must be obtained
from the IEEE by sending an email to pubs-permissions@ieee.org.}


\maketitle

\begin{abstract}
Deep neural networks (DNNs) have achieved remarkable performance in many tasks, yet they often behave as opaque black boxes. 
Explanation-guided learning (EGL) methods steer DNNs using human-provided explanations or supervision on model attributions. 
These approaches improve interpretability but typically assume benign inputs and incur heavy annotation costs. 
In contrast, both predictions and saliency maps of DNNs could dramatically alter facing imperceptible perturbations or unseen patterns. 
Adversarial training (AT) can substantially improve robustness, but it does not guarantee that model decisions rely on semantically meaningful features.
In response, we propose Explanation-Guided Adversarial Training (EGAT), a unified framework that integrates the strength of AT and EGL to simultaneously improve prediction performance, robustness, and explanation quality. 
EGAT generates adversarial examples on the fly while imposing explanation-based constraints on the model. 
By jointly optimizing classification performance, adversarial robustness, and attributional stability, EGAT is not only more resistant to unexpected cases, including adversarial attacks and out-of-distribution (OOD) scenarios, but also offer human-interpretable justifications for the decisions. 
We further formalize EGAT within the Probably Approximately Correct learning framework, demonstrating theoretically that it yields more stable predictions under unexpected situations compared to standard AT. 
\textcolor{black}{Empirical evaluations on OOD benchmark datasets show that EGAT consistently outperforms competitive baselines in both clean accuracy and adversarial accuracy (\(+37\%\)) while producing more semantically meaningful explanations, and requiring only a limited increase (\(+16\%\)) in training time. 
}
\end{abstract}

\begin{IEEEkeywords}
\textcolor{black}{Adversarial machine learning, explainable AI, image processing, robustness.}
\end{IEEEkeywords}

\section{Introduction}
Deep neural networks (DNNs) have demonstrated exceptional performance across a multitude of domains. 
However, in safety- or regulation-critical applications, 
stakeholders demand models that not only classify correctly but also provide understandable justification for their decisions \cite{guidedlearning}. 
For instance, a medical AI system should highlight diagnostic features of an image \cite{liu2024xfmp}, and financial regulators require transparent decision reasoning \cite{weber2024applications}.  
\textit{Explanation-guided learning} (EGL) has emerged to meet this demand by incorporating human knowledge directly into training \cite{ismail2021sgt,rieger2020cdep,selvaraju2019hint}.
EGL methods impose additional loss terms or regularizers on model explanations (e.g., saliency maps), encouraging the network to focus on relevant regions.  
For example, some approaches use coarse annotations such as bounding boxes \cite{rieger2020cdep} to constrain the model's attention (penalizing attributions outside known object regions), 
and others use more detailed rationales to align predictions with expert-provided explanations \cite{zhang2023magi,shen2021iai}.  
These methods improve interpretability by aligning the model's reasoning with human-understandable features. 

However, DNNs are known to be highly vulnerable to imperceptible input perturbations.
Even minor changes to the input, often indistinguishable to humans, can lead to incorrect predictions \cite{goodfellow2014fgsm,madry2017pgd}. 
Beyond accuracy degradation, such perturbations can also cause significant shifts in saliency maps, resulting in inconsistent and potentially misleading explanations \cite{ghorbani2019interpretation,sarkar2021enhanced}.
This highlights a critical limitation: 
EGL models trained \textit{solely} with explanation supervision remain susceptible to adversarial manipulation and may fail to generalize to unseen or out-of-distribution (OOD) patterns. 
Meanwhile, the robustness of EGL methods are underexplored: 
Only a few EGL-based approaches \cite{li2023dre} have \textit{empirically} examined robustness under OOD settings, 
yet these approaches offer no formal \textit{theoretical analysis} and do not explicitly address \textit{adversarial robustness}. 
Consequently, the reliability of their explanations and predictions under real-world perturbations remains uncertain.

\textit{Adversarial Training} (AT) \cite{madry2017pgd,goodfellow2014fgsm} is one of the leading defense strategies: 
it augments training with worst-case examples to significantly improve resistance to attacks. 
Several variants of AT have been proposed to stabilize model attributions under attack (e.g., by minimizing changes in the saliency map during training \cite{chen2019ignorm,chen2024training,wicker2022robust}). 
These attributional robustness methods ensure consistent predictions or explanations, but they do not explicitly ensure that the model is basing its predictions on those salient regions.
In other words, a network could maintain stable attributions under attacks while still \textit{relying on uninterpretable or spurious features}.
This limitation implies robust models may focus on irrelevant pixels or background cues, provided their explanation appears stable.
In short, there is \textit{a fundamental gap}: AT improves worst-case accuracy but neglects explanation semantics, while EGL aligns explanations but neglects model robustness.

In this work, we propose Explanation-Guided Adversarial Training (EGAT), a novel training framework that bridges the strengths of \textcolor{black}{AT} and \textcolor{black}{EGL}.
EGAT introduces a unified objective that not only aligns explanations under adversarial perturbations but also trains the model to base its predictions on semantically meaningful and human-interpretable features.
Concretely, we incorporate multiple saliency-driven constraints into the objective during adversarial training to \textit{directly} guide the model’s focus towards meaningful regions. 
The key components include: 
(1) an explanation adversarial loss to keep the model’s highlighted regions and predictions stable under attacks; 
(2) a consistency loss with priori interpretable regions to algin model's explanations with human-understandable areas and discourages attributions on irrelevant regions. 
By unifying these constraints with adversarial training, EGAT explicitly trains the model to rely on the salient features for its predictions, to enhance its interpretability, robustness and generalization. 

We justify the effectiveness of EGAT through both theoretical analysis and empirical evaluation.
\textit{Theoretically}, we formalize EGAT within the Probably Approximately Correct (PAC) learning framework and establish generalization guarantees under distributional shifts. 
Specifically, we show that EGAT yields models with provably lower generalization error in the presence of unseen perturbations compared to standard \textcolor{black}{AT}. 
\textit{Empirically}, we comprehensively evaluate EGAT's performance under both OOD settings and adversarial attacks. 
We conduct experiments on eight datasets derived from two widely used benchmarks that exhibit significant domain shifts and spurious correlations. 
EGAT is compared against \textcolor{black}{AT} methods such as DMADA \cite{xu2020dmada}, IGR \cite{ross2018improving} and IGN \cite{chen2019ignorm}, as well as leading \textcolor{black}{EGL} baselines including IRM \cite{arjovsky2019irm}, DRE \cite{li2023dre}, and SGDrop \cite{bertoin2024sgdrop}.
EGAT achieves superior performance across multiple criteria, especially high clean accuracy on both in-distribution data and out-of-distribution data, along with superior adversarial robustness under attack. 
Furthermore, we quantitatively and qualitatively evaluate explanation quality by standard metrics, including comprehensiveness and sufficiency, and case studies, respectively. 
Results show that EGAT produces explanations that are not only stable and human-aligned but also better reflect the true decision-making process of the model.

In sum, this work makes the following key contributions:
\begin{itemize}
    \item \textcolor{black}{We propose a novel training framework, namely EGAT,} that explicitly enforce predictions to stably depend on semantically meaningful regions.
    EGAT integrates multiple explanation-centric losses, including adversarial saliency consistency, region alignment with interpretable priors, and mixup attribution matching. 

    \item We provide a principled \textit{theoretical} analysis of EGAT within the 
    \textcolor{black}{PAC}
    learning framework, establishing generalization bounds on its adversarial risk. Besides, out-of-distribution stability of EGAT and traditional AT are carefully analyzed.
    
    \item We provide extensive \textit{empirical} evaluation on two domain generalization benchmarks, demonstrating that EGAT achieves superior clean and adversarial accuracy, better generalizability, and explanation faithfulness compared to a wide range of state-of-the-art baselines.
    
\end{itemize}

\textcolor{black}{
\noindent\textbf{Positioning with TCSVT literature.}
Our work is closely related to several recent papers published in \emph{IEEE Transactions on Circuits and Systems for Video Technology (TCSVT)} that study adversarial robustness and explanation-related mechanisms in vision/video tasks, including \cite{Chen2023RobustNIC,Fu2024AttackInvariance,Sun2024TargetedAttacksOD,Cheng2025ExplainAttack3D}. 
Compared to these works, EGAT is distinctive in that it (1) is a \emph{training-time defense} that explicitly optimizes both \emph{prediction robustness} and \emph{explanation faithfulness} under \textit{universal} perturbations, rather than improving robustness alone \cite{Fu2024AttackInvariance,Chen2023RobustNIC} or attacking successful rates \cite{Cheng2025ExplainAttack3D,Sun2024TargetedAttacksOD,Chen2023RobustNIC}; (2) uses explanations as \emph{supervision signals} to encourage models to rely on semantically meaningful regions, instead of leveraging explainability mainly for attacking models \cite{Cheng2025ExplainAttack3D,Sun2024TargetedAttacksOD}; and (3) provides a PAC-style theoretical analysis that incorporates explanation-related terms, and is validated empirically using both robustness metrics and explanation-quality metrics for adversarial attacks and OOD scenarios.
}

\section{Related Work}

\subsection{Explanation-guided learning}
\textcolor{black}{EGL}
methods incorporate human-aligned explanations \cite{zhao2022toward,li2024towards,ross2017right} into the training process to steer models towards more interpretable reasoning. 
The ``Right for the Right Reasons'' framework \cite{ross2017right} introduces penalizes gradients on features deemed irrelevant (based on expert annotations), and encourages the model to make predictions using the ``right'' evidence. 
Many subsequent works extend this idea of guiding model training with explanations.
Specifically, by minimizing the discrepancy between the model's saliency map and ground-truth regions of interest, HINT \cite{selvaraju2019hint} and CDEP \cite{rieger2020cdep} could bias the network to attend to the correct human-understandable visual cues. 
However, one potential drawback is that the cues are typically derived from domain experts. 

More recently, SGT \cite{ismail2021sgt} masks out low-saliency regions of an input and constrains the model’s predictions to remain unchanged for the masked versus original input. It adopts KL-divergence to align the output distributions of original and masked examples.
SGDrop \cite{bertoin2024sgdrop} selectively drops the \textit{most} salient features during training to combat the model's tendency to over-rely on a small set of predictive pixels. By occluding highly activated features (according to the model's own saliency), SGDrop encourages the network to consider a broader set of cues and reduces overfitting. 
DRE \cite{li2023dre} enforces consistency of a model’s attribution maps \textit{across different data distributions}. 
Without requiring human annotation, DRE treats shifts between training subsets as self-supervision signals, encouraging the model’s explanations to focus on invariant, causally relevant features rather than dataset-specific quirks. 

Regarding robustness of EGL models, 
DRE \cite{li2023dre} shows good generalization for OOD scenarios empirically \textit{without theoretical analysis}.
As for adversarial attacks, 
ASGT \cite{guesmi2024asgt} explicitly accounts for adversarial manipulation of inputs, while they ignore the explanation stability under attacks \textit{empirically}. 
Consequently, models trained with only explanation supervision can still be vulnerable to adversarial attacks that exploit features outside the human-aligned regions. 
\textcolor{black}{
Conversely, explanations are also leveraged to strengthen attacks: 
Cheng et al.~\cite{Cheng2025ExplainAttack3D} use explanations to guide adversarial attacks in 3D tracking, and Sun et al.~\cite{Sun2024TargetedAttacksOD} study targeted attacks against object detectors with task-specific importance considerations. 
These works are primarily attack-oriented and do not aim to improve adversarial robustness.
}
This gap motivates our work that explicitly integrate robustness considerations into the explanation alignment process, and theoretically analyze model robustness.

\subsection{Robust Machine Learning}
\textcolor{black}{AT}
for \textit{prediction} robustness was first introduced by Goodfellow et al. as a practical defense using FGSM perturbations \cite{goodfellow2014fgsm}, 
and later refined by Madry et al. through a robust optimization framework using stronger \textcolor{black}{PGD} attacks \cite{madry2017pgd}.
These techniques yield models with substantially higher prediction accuracy on adversarial inputs, effectively providing a security guarantee against certain threat models \cite{madry2017pgd}. 
More recently, AT for various settings and specific attacks have been well studied, such as pairwise AT for domain adaptation \cite{shi2024npat} and detection for Deepfake \cite{li2022adal}. 
\textcolor{black}{Chen et al.~\cite{Chen2023RobustNIC} study robustness issues in neural image compression,
while Fu et al.~\cite{Fu2024AttackInvariance} improve adversarial generalization by encouraging attack-invariant behavior.}

Beyond prediction robustness, researchers also work on \textit{explanation robustness}. 
Ghorbani et al. \cite{ghorbani2019interpretation} showed that one can craft imperceptible perturbations to an image which leave the model's prediction unchanged but dramatically distort its saliency map. 
Similarly, Slack et al. \cite{slack2020fooling} demonstrated attacks on model-agnostic explainers,
where an adversary can fool the explainer into highlighting arbitrary features without affecting the true output. 
In response, some works aligned explanations across transformed inputs. For instance, DRE \cite{li2023dre} and AFT \cite{wu2025aft} enforce that a model's saliency for the original image and for its geometrically-transformed version remain consistent. 
However, none of them evaluate the adversarial robustness by experiments nor theorems.

Furthermore, a line of work has emerged on \textcolor{black}{AT} techniques aimed at stabilizing attribution maps. 
One simple yet influential idea is IGR \cite{ross2018improving}. 
They found that penalizing the magnitude of a model's gradient with respect to the input both improve adversarial robustness and produce more globally coherent attributions. 
Tsipras et al. \cite{tsipras2018robustness} found that \textcolor{black}{AT} could guide model to focus on shape-based features that align with human perception, while they do not explicitly consider robust accuracy as well as explanation robustness.
IG-NORM \cite{chen2019ignorm} directly incorporates an attribution consistency  to penalty the discrepancy between IG-based \cite{sundararajan2017ig} attribution maps for the original input and its counterpart. 

Despite improving the resilience of saliency maps, current adversarial explanation techniques strive to preserve the \textit{shape} or \textit{stability} of the explanation under perturbations, but do not guarantee that the model is truly using the ``right'' features. 
A model could have a very stable attribution map that is nonetheless pointing to a misleading region.

In sum, prior works either align explanations to predefined annotations (EGL methods) \textit{or} enforce explanation invariance under attack, but seldom both. 
Instead, our proposed EGAT explicitly incorporates an explanation alignment objective into the adversarial training loop.

\section{Preliminaries}
We begin by introducing three key foundational concepts: Explainable Machine Learning, \textcolor{black}{EGL, and AT}, which are necessary for understanding our proposed EGAT.

\subsection{Explainable Machine Learning and Grad-CAM}
Let $S_{train} = \{(x_i, y_i)\}_{i=1}^n$ denote a training dataset, where $x_i \in \mathbb{R}^d$ and $y_i \in \{1, \ldots, C\}$ are an input instance and its label, respectively. 
We consider a classifier $f_\theta: \mathbb{R}^d \rightarrow \mathbb{R}^C$ parameterized by $\theta$, which outputs probability distribution for each $x_i$. We will omit $\theta$ if no ambiguous, and use $f(x)_c$ denote the predicted probability for class $c$. 
The primary goal of \textcolor{black}{explainable machine learning} is to understand which input features influence the model's prediction. 
Formally, attribution methods \cite{selvaraju2020gradcam,sundararajan2017ig} define an explanation function $\Phi: \mathbb{R}^d \rightarrow \mathbb{R}^d$ that assigns an importance score to each input dimension, such that $\Phi(x,c)_j$ reflects how much the $j$-th feature contributed to $f_\theta(x)$ on class $c$.

Among many attribution techniques, Grad-CAM \cite{selvaraju2020gradcam} highlights a spatial heatmap and is widely used in computer vision. 
Formally, to explain the prediction for class $c$, Grad-CAM computes the channel-wise weights as
\begin{equation}
    \alpha^k_c = \frac{1}{Z} \sum_{i,j} \frac{\partial f(x)_c}{\partial A^k_{i,j}},
\end{equation}
where $A^k$ is the $k$-th activation map of a convolutional layer, $i,j$ specifies the location of the pixels in images or entries in the intermediate feature maps, and $Z$ is the number of spatial locations. 
Then, Grad-CAM saliency map is defined by: 
\begin{equation}
\label{eq:grad_cam}
\Phi(x,c) = \text{ReLU}\left( \sum_k \alpha^k_c A^k \right).    
\end{equation}

\subsection{Explanation guided learning}
\textcolor{black}{EGL} aims to improve the faithfulness and transparency of models by incorporating attribution-based signals into training objectives \cite{ross2017right,rieger2020cdep,ismail2021sgt,li2023dre,bertoin2024sgdrop}.
These EGL techniques generally rely on an auxiliary supervision signal, such as an explanation mask $M(x)$, 
and the mask is typically defined by human annotations \cite{ross2017right,rieger2020cdep}. 
To guide the model's attributions and attentions,
EGL enforces that $\Phi(x)$ aligns with $M(x)$ via a regularization term:
\begin{equation}
\label{eq:preliminary_egl}
    \mathcal{L}_{egl} = d\big(\Phi(x), M(x)\big),    
\end{equation}
where $d(\cdot,\cdot)$ is a divergence measure, 
such as $L_1$ and \textcolor{black}{Binary Cross Entropy (BCE)}.

\subsection{Adversarial Training}
\textcolor{black}{AT} \cite{goodfellow2014fgsm, madry2017pgd} is widely used to improve a model's resilience to adversarial attacks.
To achieve the goal, 
AT exposes models to ``worst-case'' perturbations, which are small but malicious changes to the input that drastically alter model behaviors. 
Formally, it solves the following min-max problem:
\begin{equation}
\label{eq:preliminary_at}
    \min_\theta \mathbb{E}_{(x, y) \in S_{train} } \left[ 
    \max_{\delta:\|\delta\| \leq \epsilon}
    \ell
    (f_\theta(x + \delta), y) \right],
\end{equation}
where the inner maximization generates adversarial examples using attacks like FGSM \cite{goodfellow2014fgsm} or PGD \cite{madry2017pgd}, and the outer minimization updates model parameters.

\section{Methodology}
In this section, we will introduce \textcolor{black}{EGAT}, which combines \textcolor{black}{AT with EGL} in a unified model training process. 
In Sec. \ref{sec:egat_framework}, we formalize the EGAT framework and detail its components. 
In Sec. \ref{sec:egat_theoretical_pac}, we theoretically analyze EGAT in the view of PAC learning. 
By comparing with standard AT, we justify EGAT's objective for improvement in prediction robustness under unexpected situations.

\begin{figure*}
    \centering
    \includegraphics[width=0.98\linewidth]{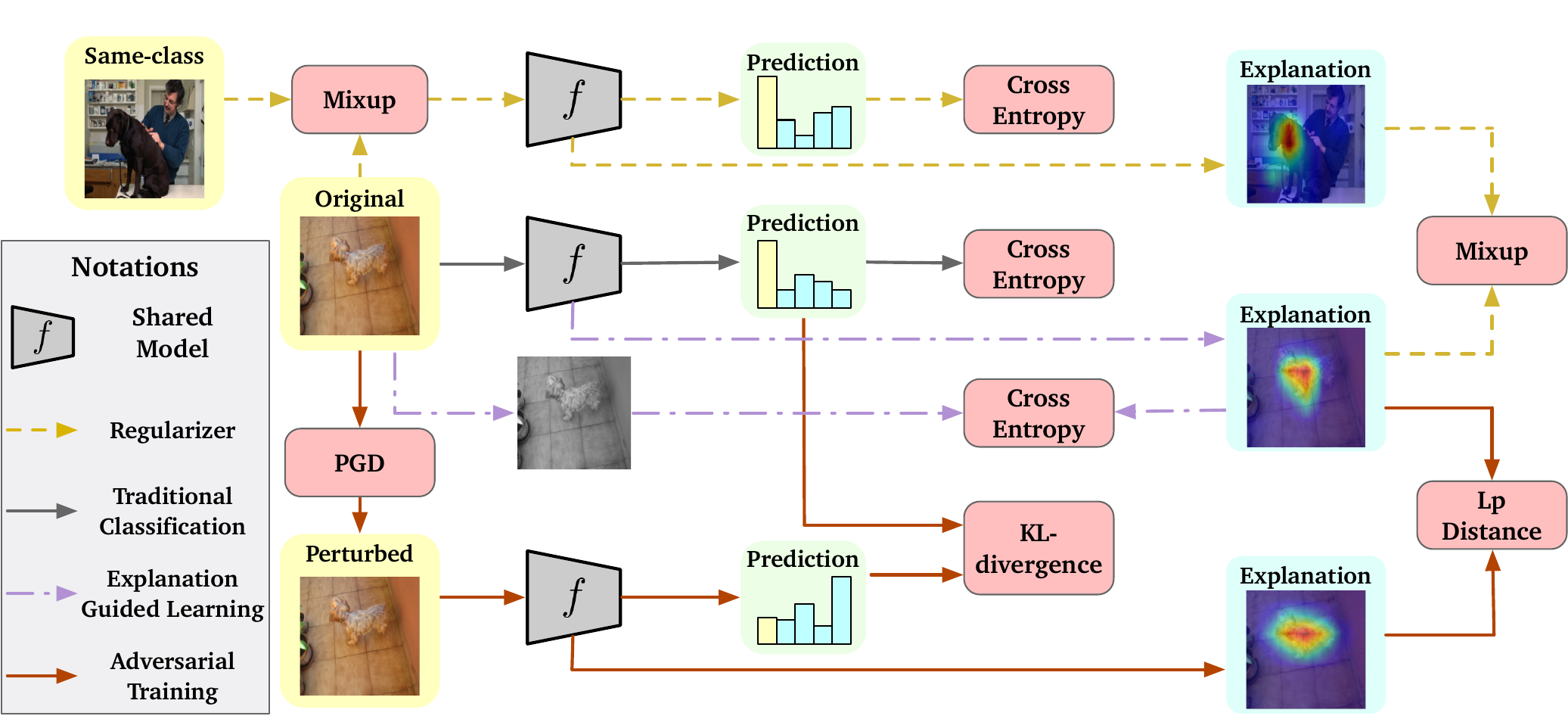}
    \caption{Overview of the proposed EGAT framework:
    For each original input, EGAT optimizes a complementary objectives of four terms:
    (1) Traditional classification loss ($\mathcal{L}_{cls}$, in gray) is simply calculated by cross entropy; 
    (2) Adversarial training based loss ($\mathcal{L}_{adv}$, in red) promotes the robustness for both predictions and explanations;
    (3) Explanation guided learning loss ($\mathcal{L}_{egl}$, in purple) adopts explanations as extra supervision signals;
    (4) Regularizer ($\mathcal{L}_{reg}$, in yellow) incorporates mixup strategy to further regulate model behaviors.
    }
    \label{fig:framework}
\end{figure*}

\subsection{Explanation-Guided Adversarial Training Framework}
\label{sec:egat_framework}
As shown in Fig. \ref{fig:framework}, 
the proposed \textcolor{black}{EGAT} framework integrates \textcolor{black}{AT} with saliency-based supervision to enhance both prediction robustness and explanation faithfulness. 

\subsubsection{Overall Objective}
EGAT is motivated by the need to make models not only robust to adversarial perturbations, but also interpretable in a consistent and trustworthy way. 
Most existing \textcolor{black}{AT} methods focus solely on label robustness, while \textcolor{black}{EGL} methods may ignore adversarial behavior. EGAT bridges this gap by considering three key objectives: 
(1) $\mathcal{L}_{adv}$ avoid model's predictions and explanations sensitive to small but potentially malicious perturbations;
(2) $\mathcal{L}_{egl}$ encourages that the model grounds its predictions on the salient and interpretable regions of the input;
(3) $\mathcal{L}_{reg}$ complements AT by regularizing model behavior within a larger region.
In sum, EGAT optimizes the following complementary objectives: 
\begin{equation}
\label{eq:obj_overall}
    \mathcal{L}_{EGAT} = \mathcal{L}_{cls} 
    + \lambda_1 \mathcal{L}_{adv} 
    + \lambda_2 \mathcal{L}_{egl}
    + \lambda_3 \mathcal{L}_{reg} ~,
\end{equation}
where $\mathcal{L}_{cls}$ is the standard classification loss, e.g., cross-entropy. 
Each $\lambda_i\geq 0, i\in\{1,2,3\}$ controls the relative importance of the associated term. 
In the following, we further discuss $\mathcal{L}_{adv}$ in Sec. \ref{sec:egat_at}, $\mathcal{L}_{egl}$ in Sec. \ref{sec:egat_egl}, and $\mathcal{L}_{reg}$ in Sec. \ref{sec:egat_reg}.
We empirically justify each component's contribution by ablation studies in Sec. \ref{sec:ablation_study}, where any of these losses are removed lead to drops in either predictive robustness or explanation quality.

\subsubsection{Adversarial Training}
\label{sec:egat_at}
To directly promote adversarial robustness, EGAT incorporates \textcolor{black}{AT} into its objective, ensuring that both the model's predictions and their corresponding explanations remain stable under input perturbations. 
Formally, for each input sample $x$, we use \textcolor{black}{PGD} \cite{madry2017pgd} to compute an adversarial example $x^{adv} = x + \delta^\ast$ by solving:
\begin{equation}
\label{eq:adv_sample}
    \delta^\ast = \arg \max_{\|\delta\| \leq \epsilon} \ell(f_\theta(x + \delta), y),
\end{equation}
where $\ell(\cdot)$ is the standard classification loss (e.g., cross-entropy), and $\epsilon$ controls the perturbation strength. 
\textcolor{black}{
We adopt PGD because it is a widely used and standard choice in AT studies, and it provides a strong first-order adversary that generates informative training-time perturbations.
}

Unlike traditional AT methods that focus solely on prediction consistency, EGAT enforces consistency in both predictions and explanations. 
Specifically, the adversarial loss in EGAT is formulated as:
\begin{equation}
\label{eq:egat_adv}
    \mathcal{L}_{adv} = KL \big(f(x)\, \| \, f(x^{adv}) \big) + 
    \lambda_4 \|\Phi(x), \Phi(x^{adv})\|_2,
\end{equation}
where the first term is the KL divergence between the output distributions on the clean input $x$ and its adversarial counterpart $x^{adv}$, encouraging prediction stability. 
The second term penalizes discrepancies in Grad-CAM saliency maps between $x$ and $x^{adv}$, thereby promoting explanation consistency. 

The underlying intuition is that if a small, imperceptible perturbation does not significantly alter a model's internal activation patterns (as reflected by Grad-CAM), 
then its output prediction should remain stable as well. 
The inclusion of the explanation consistency term is further motivated by Lemma~\ref{lemma:ood_explanation_stability}, which formally establishes that explanation stability plays a critical role in ensuring prediction robustness under distributional shifts and adversarial conditions.

\subsubsection{Explanation guided learning}
\label{sec:egat_egl}
To guide the model's attention toward perceptually meaningful regions, we introduce an explanation alignment loss that penalizes discrepancies between the model's attribution map $\Phi(x)$ and a reference importance mask $M(x)$. Formally:
\begin{equation}
\label{eq:egat_egl}
    \mathcal{L}_{egl}=BCE\big(\Phi(x),M(x)\big)~.
\end{equation}
EGAT is agnostic to the choice of gradient-based method \cite{sundararajan2017ig,selvaraju2020gradcam} and can be applied with these works seamlessly. 
We adopt Grad-CAM \cite{selvaraju2020gradcam} in Equation (\ref{eq:grad_cam}) as the default saliency generator 
for fair comparisons to existing works, since DRE \cite{li2023dre} 
utilize Grad-CAM as explanation methods.
Besides that, existing work \cite{adebayo2018sanitychecks} implies that only vanilla gradient and Grad-CAM pass sanity checks among gradient-based methods, 
where vanilla gradients could introduce instability and sensitivity to noise \cite{sundararajan2017ig}.

Prior works often rely on handcrafted or expert-annotated masks to define $M(x)$, such as segmentation maps of target objects or bounding-box regions \cite{ross2017right,rieger2020cdep,zhang2023magi,zhuang2019care}. 
However, acquiring such labels incurs considerable annotation costs and limits scalability. 
To reduce reliance on external supervision, methods like SGT \cite{ismail2021sgt} instead extract $M(x)$ by picking the pixels with the largest gradient magnitudes.
In contrast, EGAT employs a simple and effective strategy: 
we transform the original RGB input image $x$ into its grayscale counterpart, and treat it as a soft proxy for visual saliency. This grayscale map captures coarse contrast structure and naturally highlights object boundaries, and luminance patterns that often correlate with semantic content. 

Intuitively, minimizing $\mathcal{L}_{egl}$ leads to two key benefits:
(1) It makes the model's decisions more interpretable, since the highlighted regions align with human-understandable concepts; 
(2) It improves robustness, because the model is less likely to be distracted by confounding features \textit{outside} the region of interest.

\subsubsection{Regularizers}
\label{sec:egat_reg}
Beyond adversarial and explanation-guided objectives, EGAT incorporates additional regularizers for better generalization. 
To complement AT, which optimizes a tiny local region of inputs, 
mixup strategy \cite{guo2019mixup,xu2020dmada} reduces vulnerability to unforeseen data variations within a larger region.
Mixup generates new training samples by linearly interpolating between existing examples. 
Formally, 
given two training instances $(x_i, y_i)$ and $(x_j, y_j)$, where $y_i = y_j$, a synthetic input is created as:
\begin{equation}
    \tilde{x} = \beta x_i + (1 - \beta) x_j, \quad \textnormal{where } \beta \sim \text{Beta}(\alpha, \alpha).
\end{equation}

Again, EGAT extends the mixup principle to both prediction and explanation spaces: 
\begin{equation}
    \mathcal{L}_{reg} = CE \big(f(\tilde{x}), y \big) 
    + \| \tilde\Phi(x)-\Phi(\tilde{x}) \|_1 
    + \|\Phi(x)\|_1 ~,
\end{equation}
where $\tilde \Phi(x)=\lambda \Phi(x_i)+(1-\lambda)\Phi(x_j)$ is the blended attribution maps. 
The first two terms enforce that the model produce correct predictions with reasonable explanations, and the last term promotes concise attribution maps.

\subsection{PAC-Theoretic Analysis for EGAT}
\label{sec:egat_theoretical_pac}
In this subsection, we theoretically analyze EGAT in the view of 
\textcolor{black}{PAC learning},
adapted for adversarial robustness following the analyses in \cite{viallard2021pac,xiao2022adversarial}. 
Specifically, we introduce basic notations and definitions in Sec. \ref{sec:pac_notation}.
Then, we provide EGAT's generalization bounds in Sec. \ref{sec:generlization_bound_egat}, 
and explicitly compare EGAT with standard \textcolor{black}{AT} in Sec. \ref{sec:theoretical_comparison_EGAT_AT}.

\subsubsection{Basic notations and definitions}
\label{sec:pac_notation}

Let \((\mathcal{X}, \mathcal{Y})\) denote the input and output spaces, respectively, and let \(\mathcal{D}\) be an unknown data distribution over \(\mathcal{X} \times \mathcal{Y}\). 
A hypothesis \(f\) maps each input \(x \in \mathcal{X} \subseteq \mathbb{R}^d\) to a probability vector over \(C\) classes,
and $\Phi(x):\mathcal{X}\to\mathbb{R}^d$ is an explanation function. 
We begin by stating the key assumptions on the model $f(x)$ and explanation map $\Phi(x)$ used in our analysis.

\begin{assumption}[Lipschitz Continuity of Model and Explanation]
\label{assumption:lipschitz_continuity}
The model \( f\) is \( \kappa_f \)-Lipschitz with respect to the input under norm \( \| \cdot \|_2 \), and its explanation map \( \Phi \) is \( \kappa_\Phi \)-Lipschitz \cite{bhusaltowards}. That is, for all \( x, x^\prime \in \mathcal{X} \),
\begin{align}
\| f(x) - f(x^\prime) \|_2 &\le \kappa_f \| x - x^\prime \|_2, \\
\| \Phi(x) - \Phi(x^\prime) \|_2 &\le \kappa_\Phi \| x - x^\prime \|_2.
\end{align}
\end{assumption}

\begin{assumption}[Bounded Gradient Magnitude]
\label{assumption:bounded_gradient_magnitude}
    There exists $G > 0$ such that $|\nabla_x f(x)|_2 = |\Phi(x)|_2 \leq G$ for all $x$. This controls explanation sensitivity and is common in interpretability robustness \cite{tan2023robust}.
\end{assumption}

We now define the adversarial loss function used in EGAT and the associated risk quantities.

\begin{definition}[EGAT Adversarial Loss]
Let \( \ell_{cls}(f(x), y) := \mathbf{1}[\hat{y}(x) \ne y] \) be the 0–1 classification loss, where \( \hat{y}(x) = \arg\max_j f_j(x) \) is the predicted class. For a perturbation \( \delta \in \mathbb{R}^d \) with \( \|\delta\| \le \epsilon \), the EGAT adversarial loss is defined as:
\begin{equation}
\begin{aligned}
\ell_{adv}^{EGAT}(x, y; f) := \max_{\|\delta\| \le \epsilon} \bigg(
& \ell_{cls}(f(x+\delta),y) \notag \\
& + \alpha \cdot  \frac{\| \Phi(x + \delta) - \Phi(x) \|_2}{G} \bigg)~,
\end{aligned}
\end{equation}
where \( \alpha > 0 \) is a weighting coefficient, \( G > 0 \) is the explanation normalization constant.
Note that setting \( \alpha = 0 \) recovers the standard adversarial loss.
\end{definition}

\begin{definition}[Population Adversarial Risk under EGAT]
The population-level adversarial risk under EGAT is: 
\begin{equation}
R_{adv}^{EGAT}(f) := \mathbb{E}_{(x,y) \sim \mathcal{D}} [ \ell_{adv}^{EGAT}(x, y; f) ].
\end{equation}
\end{definition}

\begin{definition}[Empirical Adversarial Risk under EGAT]
Given a training dataset \( S = \{(x_i, y_i)\}_{i=1}^n \) drawn i.i.d. from distribution \( \mathcal{D} \), the empirical adversarial risk under EGAT is: 
\begin{equation}
\hat{R}_{adv}^{EGAT}(f) := \frac{1}{n} \sum_{i=1}^n \ell_{adv}^{EGAT}(x_i, y_i; f).
\end{equation}
\end{definition}

\subsubsection{Generalization Bound for EGAT}
\label{sec:generlization_bound_egat}

We now state a generalization bound for the adversarial risk under EGAT. The bound becomes tighter for models with lower Lipschitz constants \( \kappa_f \) and \( \kappa_\Phi \), both of which are implicitly controlled by the EGAT objective.

\begin{theorem}[Generalization Bound of Adversarial Risk for EGAT]
\label{thm:egat_generalization}
Let \( f \in \mathcal{H} \) be a hypothesis satisfying Assumptions~\ref{assumption:lipschitz_continuity} and~\ref{assumption:bounded_gradient_magnitude}. Then, with probability at least \( 1 - \delta \) over the draw of \( n \) i.i.d. samples from \( \mathcal{D} \), the following holds:
\begin{equation}
R_{adv}^{EGAT}(f) 
\leq \hat{R}_{adv}^{EGAT}(f) + \mathcal{O}\left( \frac{(\kappa_f + \kappa_\Phi)\sqrt{d} + \log(1/\delta)}{\sqrt{n}} \right),
\end{equation}
where \( d \) is the input dimensionality and \( n \) is the sample size.
\end{theorem}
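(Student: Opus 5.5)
The argument is the standard uniform-convergence route via Rademacher complexity, applied to the loss class induced by $\ell_{adv}^{EGAT}$. It has three steps: bound the range of the loss, apply a concentration bound, and control the Rademacher complexity of the adversarial loss class by the Lipschitz constants in Assumption~\ref{assumption:lipschitz_continuity}.

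\textbf{Step 1 (range of the loss).} The 0--1 term lies in $[0,1]$. By Assumption~\ref{assumption:bounded_gradient_magnitude},
\[
\|\Phi(x+\delta)-\Phi(x)\|_2 \le 2G ,
\]
so the explanation term lies in $[0,2\alpha]$. Hence $\ell_{adv}^{EGAT}\in[0,1+2\alpha]$, with $\alpha$ treated as a constant absorbed into $\mathcal{O}(\cdot)$. The standard bound, via McDiarmid plus symmetrization, then gives with probability at least $1-\delta$, uniformly over the class,
\[
R_{adv}^{EGAT}(f)\le \hat R_{adv}^{EGAT}(f)+2\,\mathfrak{R}_n(\mathcal{L}_{adv})+(1+2\alpha)\sqrt{\tfrac{\log(1/\delta)}{2n}} ,
\]
where $\mathcal{L}_{adv}=\{(x,y)\mapsto \ell_{adv}^{EGAT}(x,y;f): f\in\mathcal{H}\}$. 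Since $\sqrt{\log(1/\delta)/n}\le(1+\log(1/\delta))/\sqrt n$, this fits the stated form.

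\textbf{Step 2 (surrogate for the 0--1 loss).} The 0--1 loss is not Lipschitz, so I would replace it by a ramp (margin) loss with a fixed margin that upper bounds the indicator. This means the theorem is really proved for the ramp surrogate, with the margin constant absorbed into $\mathcal{O}$. The adversarial surrogate $\max_{\|\delta\|\le\epsilon}[\cdot]$ is then controlled by $\kappa_f$: the clean-to-perturbed deviation is at most $\kappa_f\epsilon$. The explanation term is at most $\alpha\kappa_\Phi\epsilon/G$.

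\textbf{Step 3 (Rademacher complexity; the main obstacle).} The maximum over $\delta$ destroys the simple structure of the class. I would sidestep this by covering, as in \cite{xiao2022adversarial}. Restrict $\mathcal{H}$ to the Lipschitz hypotheses of Assumption~\ref{assumption:lipschitz_continuity} on a bounded input domain. Both summands of the adversarial loss are then $\mathcal{O}(\kappa_f+\kappa_\Phi)$-Lipschitz in the input: the supremum of Lipschitz functions is Lipschitz with the same constant. Next, bound the covering number of the resulting loss class in sup-norm. Then apply Dudley's entropy integral, or the single-scale Massart bound, which gives a complexity of order $(\kappa_f+\kappa_\Phi)\sqrt{d}/\sqrt n$. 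I expect this step to be the main difficulty. Covering numbers of general Lipschitz classes in $\mathbb{R}^d$ scale exponentially in $d$ and yield rates like $n^{-1/d}$, not $\sqrt{d/n}$. To obtain the stated $\sqrt d$ dependence I would first try assuming that $\mathcal{H}$ is parametrized with effective parameter dimension comparable to $d$ and is Lipschitz in its parameters. The log-covering number is then $\mathcal{O}\big(d\log((\kappa_f+\kappa_\Phi)/\varepsilon)\big)$, and Dudley's integral gives $\mathcal{O}\big((\kappa_f+\kappa_\Phi)\sqrt{d/n}\big)$ up to logarithmic factors. Those factors are absorbed in $\mathcal{O}(\cdot)$. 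Combining this with Steps 1--2 yields the theorem.
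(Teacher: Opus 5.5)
You follow the paper's route. You form the loss class induced by $\ell_{adv}^{EGAT}$, bound its Rademacher complexity through Lipschitz constants, and finish with the symmetrization/McDiarmid bound for bounded losses. Your bookkeeping corrections are right: under Assumption~\ref{assumption:bounded_gradient_magnitude} the range is $[0,1+2\alpha]$, not $[0,1+\alpha]$, and the deviation term carries that range.

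Your diagnosis is also right. The paper's Step~2 calls the 0--1 term $\kappa_f$-Lipschitz, and its Step~3 asserts $\mathfrak{R}_n(\mathcal{F}_{EGAT})=\mathcal{O}\big((\kappa_f+\alpha\kappa_\Phi/G)\sqrt d/\sqrt n\big)$ from Lipschitzness in the input alone. As you say, that gives nothing like a $\sqrt{d/n}$ rate. So the paper does not contain the missing idea, and your patch closes the gap only for a different statement.

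\textbf{The surrogate step.} This is not a technicality. Under $f\mapsto u+s(f-u)$, with $u$ the uniform vector and $0<s\le 1$, $\hat y$ is unchanged, so the adversarial 0--1 term is unchanged. Meanwhile $\kappa_f$ and $\kappa_\Phi$ both scale by $s$. Input-Lipschitz constants therefore cannot measure the complexity of the 0--1 part at all. The ramp loss restores a scale through its margin $\gamma$, and the effective constant becomes $\kappa_f/\gamma$. But then the right-hand side contains the empirical adversarial $\gamma$-margin risk, which dominates $\hat R_{adv}^{EGAT}(f)$. From $R\le R^{\mathrm{ramp}}\le \hat R^{\mathrm{ramp}}+\cdots$ the stated inequality does not follow.

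\textbf{The covering step.} For a parametric class, $\log N(\varepsilon)$ is governed by two things: the Lipschitz constant of $\theta\mapsto \ell_{adv}^{EGAT}(x,y;f_\theta)$, uniform in $(x,y)$ and $\delta$, and the radius of the parameter set. For a gradient-based $\Phi_\theta$ that Lipschitz condition is a second-order smoothness requirement on the network. Assumption~\ref{assumption:lipschitz_continuity} controls only Lipschitzness in $x$. Writing $\log N(\varepsilon)=\mathcal{O}\big(d\log((\kappa_f+\kappa_\Phi)/\varepsilon)\big)$ therefore conflates two different constants. Likewise, ``parameter dimension comparable to $d$'' is an extra hypothesis chosen to produce the $\sqrt d$.

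Even granting both, Dudley's integral with $p$ parameters gives order $(1+\alpha)\sqrt{p\log(LR_\Theta)/n}$, where $L$ is the parameter-Lipschitz constant (including the factor $1/\gamma$). The Lipschitz constant enters only logarithmically, so the $(\kappa_f+\kappa_\Phi)\sqrt d$ form is reached only by loose over-bounding.

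\textbf{What the statement actually requires.} Read literally, the theorem fixes $f$ before the sample is drawn. In that case Hoeffding's inequality on the i.i.d.\ variables $\ell_{adv}^{EGAT}(x_i,y_i;f)\in[0,1+2\alpha]$ already gives $R_{adv}^{EGAT}(f)\le \hat R_{adv}^{EGAT}(f)+(1+2\alpha)\sqrt{\log(1/\delta)/(2n)}$, with no complexity term. The meaningful, data-dependent version needs a structural assumption on $\mathcal{H}$ beyond Assumptions~\ref{assumption:lipschitz_continuity}--\ref{assumption:bounded_gradient_magnitude}, such as a parametric or norm-bounded network class as in \cite{xiao2022adversarial}. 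It should then be stated as a margin bound. With that made explicit, your Steps~1--3 prove the modified statement. As written, neither your argument nor the paper's sketch proves the theorem.
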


\begin{proof}[Proof Sketch]
We outline the main steps leading to the generalization bound.

\textbf{Step 1: Function class definition.}  
Let \( \mathcal{F}_{\text{EGAT}} \) denote the class of loss functions induced by models \( f \in \mathcal{H} \):
\[
\begin{aligned}
\mathcal{F}_{EGAT} = \Big\{ (x, y) \mapsto \sup_{\|\delta\| \le \epsilon} \big[
& \ell_{cls}(f(x+\delta),y) \\
& + \alpha \cdot \tfrac{\|\Phi(x+\delta) - \Phi(x)\|_2}{G} 
\big] \Big\}.
\end{aligned}
\]

\textbf{Step 2: Lipschitz analysis.}  
Using Assumption~\ref{assumption:lipschitz_continuity}, both terms inside the loss are Lipschitz in \( x \), with constants \( \kappa \) and \( \alpha \kappa_\Phi / G \), respectively. Hence, the EGAT adversarial loss is \( (\kappa + \alpha \kappa_\Phi / G) \)-Lipschitz.

\textbf{Step 3: Apply Rademacher complexity bounds.}  
Let $\mathfrak{R}_n(\mathcal{F}_{EGAT})$ denote the empirical Rademacher complexity of the function class. Since the EGAT adversarial loss is $(\kappa_f + \frac{\alpha \kappa_\Phi}{G})$-Lipschitz,
the Rademacher complexity satisfies:
\[
\mathfrak{R}_n(\mathcal{F}_{EGAT}) = \mathcal{O}\left( \frac{(\kappa_f + \frac{\alpha \kappa_\Phi}{G}) \sqrt{d}}{\sqrt{n}} \right).
\]

\textbf{Step 4: Uniform convergence bound.}  
By standard uniform convergence bounds \cite{bartlett2002rademacher},
for losses bounded in $[0,1+\alpha]$, with probability at least $1 - \delta$, we have:
\[
R_{adv}^{EGAT}(f) \le \hat{R}_{adv}^{EGAT}(f) + 2 \mathfrak{R}_n(\mathcal{F}_{EGAT}) + \sqrt{\frac{\log(1/\delta)}{2n}}.
\]
Substituting the bound on complexity yields the stated result. 

\end{proof}

\subsubsection{Theoretical Comparison between EGAT and Standard AT}
\label{sec:theoretical_comparison_EGAT_AT}
We now formalize the comparison between EGAT and traditional \textcolor{black}{AT} under distribution shifts, based on their sensitivity to spurious input components and explanation stability.

\begin{assumption}[Input Decomposition]
\label{assumption:input_decomposition}
    Each input \( x \in \mathbb{R}^d \) can be decomposed as \( x = (x^{(obj)}, x^{(bg)}) \), where \( x^{(obj)} \in \mathbb{R}^{d^{(obj)}} \) encodes causally relevant (objective) features, and \( x^{(bg)} \in \mathbb{R}^{d^{(bg)}} \) denotes background or spurious features.
    The total input dimension satisfies \( d = d^{(obj)} + d^{(bg)} \), and the feature sets are disjoint: $x^{(obj)}\cap x^{(bg)}=\emptyset$.
\end{assumption}

\begin{assumption}
\label{assumption:label_soly_depend_on_obj}
    The label \( y \in \mathcal{Y} \) depends solely on \( x^{(obj)} \), i.e., \( y = g(x^{(obj)}) \) for some unknown function \( g \).
\end{assumption}

\begin{lemma}[Out-of-Distribution Stability via Explanation Consistency]
\label{lemma:ood_explanation_stability}
Let \( x_{train} = (x^{(obj)}, x^{(bg)}_{train}) \) and \( x_{test} = (x^{(obj)}, x^{(bg)}_{test}) \) be two inputs sharing the same core features but differing in background. 
Besides, 
suppose that Assumptions \ref{assumption:lipschitz_continuity}, \ref{assumption:input_decomposition}, and \ref{assumption:label_soly_depend_on_obj} hold.
Then the following bound on the prediction difference holds:
\begin{equation}
\label{eq:ood_output_bound}
\begin{aligned}
\big\| f(x_{test}) - f(x_{train}) \big\| 
& \le \left\| \nabla_{x^{(bg)}} f(x_{train}) \right\| 
   \cdot \| x^{(bg)}_{test} - x^{(bg)}_{train} \| \\
& \quad + \frac{\kappa_\Phi}{2} 
   \cdot \| x^{(bg)}_{test} - x^{(bg)}_{train} \|^2.
\end{aligned}
\end{equation}
\end{lemma}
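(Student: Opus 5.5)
The plan is a first-order Taylor expansion of $f$ along a path that changes only the background coordinates, with an integral-form remainder. The remainder is controlled by the Lipschitz continuity of the explanation map. The link between the two is the identification $\Phi(x)=\nabla_x f(x)$ that the paper makes in Assumption~\ref{assumption:bounded_gradient_magnitude}. Under that identification, the $\kappa_\Phi$-Lipschitz property of $\Phi$ in Assumption~\ref{assumption:lipschitz_continuity} says exactly that $\nabla_x f$ is $\kappa_\Phi$-Lipschitz, i.e.\ $f$ is $\kappa_\Phi$-smooth. That is precisely what produces the quadratic term $\frac{\kappa_\Phi}{2}\|\cdot\|^2$.

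Concretely, I would use Assumption~\ref{assumption:input_decomposition} to set $h := x_{test}-x_{train} = (0,\, x^{(bg)}_{test}-x^{(bg)}_{train})$. The object block vanishes because both inputs share $x^{(obj)}$, so $\|h\| = \|x^{(bg)}_{test}-x^{(bg)}_{train}\|$. On the segment $x(t)=x_{train}+th$, $t\in[0,1]$, the fundamental theorem of calculus gives
\begin{equation*}
f(x_{test})-f(x_{train}) = \nabla_x f(x_{train})\,h + \int_0^1 \big(\nabla_x f(x(t))-\nabla_x f(x_{train})\big)\,h\,dt .
\end{equation*}
Since $h$ is supported on the background coordinates, $\nabla_x f(x_{train})\,h=\nabla_{x^{(bg)}} f(x_{train})\,h^{(bg)}$. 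By Cauchy--Schwarz (or the operator-norm inequality), the first term is bounded by $\|\nabla_{x^{(bg)}} f(x_{train})\|\cdot\|x^{(bg)}_{test}-x^{(bg)}_{train}\|$.

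For the remainder, I would write $\nabla_x f=\Phi$ and apply Assumption~\ref{assumption:lipschitz_continuity}:
\begin{equation*}
\|\nabla_x f(x(t))-\nabla_x f(x_{train})\| = \|\Phi(x(t))-\Phi(x_{train})\| \le \kappa_\Phi\, t\,\|h\|.
\end{equation*}
The integral is then at most $\int_0^1 \kappa_\Phi t\|h\|^2\,dt=\frac{\kappa_\Phi}{2}\|h\|^2$. Adding the two bounds via the triangle inequality gives \eqref{eq:ood_output_bound}. Assumption~\ref{assumption:label_soly_depend_on_obj} is not needed for the inequality itself. It only serves to interpret the bound: the right-hand side measures how much the prediction moves under label-irrelevant changes.

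The step I expect to be the main obstacle is making the identification $\Phi=\nabla_x f$ rigorous when $f$ is vector-valued ($\mathbb{R}^C$). In that case $\nabla_x f$ is a Jacobian, while $\Phi$ is stated as a map into $\mathbb{R}^d$. I would first handle it per class: apply the argument to the scalar $f_c$ with $\Phi(\cdot,c)=\nabla_x f_c$, which yields the bound componentwise. Alternatively, I would interpret $\|\cdot\|$ on gradients as the operator norm and read Assumption~\ref{assumption:lipschitz_continuity} for $\Phi$ as Lipschitz continuity of the Jacobian in that norm. The same integral argument then goes through verbatim. A secondary point is that Grad-CAM involves a ReLU and is not literally the input gradient, so I would state explicitly that the lemma is proved under the idealization of Assumption~\ref{assumption:bounded_gradient_magnitude}. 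With that caveat, differentiability of $f$ along the segment is all that is required, and the bound needs only Lipschitz continuity of the gradient, not a second derivative.
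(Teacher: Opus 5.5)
Your proposal is correct and follows essentially the same route as the paper's sketch. Both use the integral mean-value formula along the segment that moves only the background block, Cauchy--Schwarz for the first-order term, and the $\kappa_\Phi$-Lipschitz gradient (via $\Phi=\nabla_x f$) integrated over $t$ to produce $\frac{\kappa_\Phi}{2}\|x^{(bg)}_{test}-x^{(bg)}_{train}\|^2$; the only cosmetic difference is that you split off $\nabla_x f(x_{train})h$ before bounding, whereas the paper bounds $\|\nabla_{x^{(bg)}} f(x_{train}+t\Delta x)\|$ by the triangle inequality inside the integral. Your caveats also go beyond the paper: it relies on the same identification $\Phi=\nabla_x f$ without flagging it (Assumption~\ref{assumption:bounded_gradient_magnitude} literally equates only the norms, and is not among the lemma's listed hypotheses), and it glosses over the Jacobian-versus-$\mathbb{R}^d$ mismatch that your operator-norm reading resolves.
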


\begin{proof}[Proof Sketch]
Let \( \Delta x := x_{test} - x_{train} = (0, \Delta x^{(bg)}) \) with \( \Delta x^{(bg)} := x^{(bg)}_{test} - x^{(bg)}_{train} \). Since the variation occurs solely in the background subspace. By the integral form of the mean value theorem,
\begin{equation}
\begin{aligned}
    f(x_{test}) - f(x_{train}) &= \int_0^1 \nabla f(x_{train} + t \Delta x) \cdot \Delta x \, dt. \\
    &= \int_0^1 \nabla_{x^{(bg)}} f(x_{train} + t \Delta x) \cdot \Delta x^{(bg)} \, dt.
\end{aligned}
\end{equation}
Applying Cauchy–Schwarz inequality and using the smoothness of $f$,
\begin{equation*}
\begin{aligned}
    & \left\| f(x_{test}) - f(x_{train}) \right\| \le \\
    & \qquad\qquad\qquad \int_0^1 \left\| \nabla_{x^{(bg)}} f(x_{train} + t \Delta x) \right\| \cdot \left\| \Delta x^{(bg)} \right\| dt.   
\end{aligned}
\end{equation*}
Assuming \( \nabla f \) is \( \kappa_\Phi \)-Lipschitz (as implied by Assumption~\ref{assumption:lipschitz_continuity}), we bound the gradient drift as:
\[
\left\| \nabla_{x^{(bg)}} f(x_{train} + t \Delta x) - \nabla_{x^{(bg)}} f(x_{train}) \right\| \leq \kappa_\Phi t \| \Delta x^{(bg)} \|,
\]
and integration yields the second-order term in Equation~\eqref{eq:ood_output_bound}. This completes the sketch.
\end{proof}

\textbf{Discussion.}
Lemma \ref{lemma:ood_explanation_stability}
indicates that the variation in prediction under background perturbation is controlled by both (1) the model's sensitivity to \( x^{(bg)} \), and (2) the smoothness of the explanation function. 
In particular, define the background sensitivity constant:
$\kappa^{(bg)} := \sup_x \left\| \nabla_{x^{(bg)}} f(x) \right\|$.
We obtain the bound:
\begin{equation}
    \left\| f(x_{test}) - f(x_{train}) \right\| \le \kappa^{(bg)} \cdot \| \Delta x^{(bg)} \| + \frac{\kappa_\Phi}{2} \cdot \| \Delta x^{(bg)} \|^2.
\end{equation}

\textit{EGAT versus Traditional AT.}
Traditional \textcolor{black}{AT} enforces prediction consistency under small norm-bounded input perturbations but does not constrain the model's internal reasoning. 
As a result, while \( f(x_{train}) \approx f(x_{test}) \) may hold, its explanation \( \Phi(x) \) may vary substantially.
In contrast, EGAT explicitly minimizes an attributional loss that penalizes shifts in \( \Phi \) across perturbations. This leads to:
(1) fewer reliance on background features and thus smaller \( \kappa^{(bg)} \);
(2) improved smoothness and lower changes in $\Phi$ and thus smaller $\kappa_{\Phi}$.
Consequently, EGAT-trained models exhibit provably smaller output deviation under domain shifts than standard AT models.

\textit{Unified View of Robustness.}
The above result generalizes across multiple practical settings:
\begin{enumerate}
    \item \textit{Adversarial Robustness:} When $x_{train}$ and $x_{test}$ is a clean input and its adversarial counterpart, 
    Eq.~\eqref{eq:ood_output_bound} shows that EGAT offers stronger guarantees on prediction stability under adversarial perturbations.
    \item \textit{Out-of-Distribution Generalization:} When $x_{train}$ and $x_{test}$ come from different domains but share semantics (e.g., in domain adaptation), the bound similarly implies improved generalization across domains.
\end{enumerate}

\section{Experiments}

\begin{table*}[ht]
\centering
\caption{Performance comparison across domains on VLCS (top) and Terra Incognita (bottom). Metrics include clean accuracy (cAcc), adversarial accuracy (aAcc), Comprehensiveness (Comp), and Sufficiency (Suff).
\textcolor{black}{We highlight \textbf{winner} and \underline{runner-up}.}
}
\label{tab:main_results}
\resizebox{\textwidth}{!}{
\begin{tabular}{l|cccc|cccc|cccc|cccc}
\toprule
\textbf{Model} & \multicolumn{4}{c|}{\textbf{Caltech101}} & \multicolumn{4}{c|}{\textbf{LabelMe}} & \multicolumn{4}{c|}{\textbf{SUN09}} & \multicolumn{4}{c}{\textbf{VOC2007}} \\
& cAcc & aAcc & Comp & Suff & cAcc & aAcc & Comp & Suff & cAcc & aAcc & Comp & Suff & cAcc & aAcc & Comp & Suff \\
\midrule
ERM  
& 98.68 & 38.51 & 0.48 & 38.69 
& 76.17 & 4.24 & 13.97 & 12.78 
& 81.25 & 7.04 & \textbf{18.63} & 27.16 
& 84.88 & 12.96 & 18.47 & 18.77 \\
IGR 
& \underline{99.33} & 71.68 & 1.47 & 39.54 
& 77.33 & 4.00 & 13.21 & \textbf{9.04}
& 79.32 & \underline{12.38} & 14.25 & 31.28 
& 86.59 & \underline{15.55} & 19.94 & \underline{15.72} \\
IGN 
& 99.29	& 76.54 & 2.25 & \underline{13.82} 
& 74.29 & 1.88 & 12.37 & \underline{9.67} 
& \textbf{83.61} & 6.66 & 10.95 & \textbf{25.95} 
& 86.37 & 11.29 & 20.61 & \textbf{11.19} \\
DMADA 
& 99.05 & 69.27 & 0.73 & 47.36 
& 76.41 & 0.93 & 18.29 & 35.62 
& 81.26 & 6.97 & 15.38 & 40.20
& 86.26 & 10.54 & 8.35 & 37.90 \\
IRM
& 95.35 & 43.48 & 2.65 & 30.31 
& 76.41 & 5.22 & 7.16 & 22.14
& 80.46 & 10.07 & 12.23 & 37.43
& 85.22 & 9.60 & 9.05 & 38.77 \\
DRE 
& 99.24 & \underline{77.43} & 3.25 & 29.19 
& \textbf{79.01} & \underline{7.31} & \underline{17.90} & 14.69 
& 80.64 & 5.90 & \underline{18.03} & \underline{26.02}
& 86.07 & 13.14 & 17.02 & 16.04 \\ 
SGDrop 
& 99.31 & 70.35 & \textbf{10.09} & 27.33 
& 78.67 & 6.60 & 15.45 & 10.19 
& 81.04 & 6.09 & 10.56 & 36.19 
& \textbf{87.28} & 13.14 & \underline{21.37} & 26.42 \\
EGAT 
& \textbf{99.66} & \textbf{98.23} & \underline{9.65} & \textbf{1.42} 
& \underline{78.74} & \textbf{55.66} & \textbf{19.71} & 13.76 
& \textbf{83.61} & \textbf{52.19} & 16.40 & 34.19 
& \underline{86.66} & \textbf{45.18} & \textbf{25.70} & 29.64 \\
\bottomrule
\end{tabular}
}

\vspace{1em}

\resizebox{\textwidth}{!}{
\begin{tabular}{l|cccc|cccc|cccc|cccc}
\toprule
\textbf{Model} & \multicolumn{4}{c|}{\textbf{Location 38}} & \multicolumn{4}{c|}{\textbf{Location 43}} & \multicolumn{4}{c|}{\textbf{Location 46}} & \multicolumn{4}{c}{\textbf{Location 100}} \\
& cAcc & aAcc & Comp & Suff & cAcc & aAcc & Comp & Suff & cAcc & aAcc & Comp & Suff & cAcc & aAcc & Comp & Suff \\
\midrule
ERM  
& \textbf{86.02} & 9.82  & 54.86 & 23.59 
& 80.35 & 16.37 & 56.44 & 38.42 
& \underline{79.06} & 3.61  & 43.43 & \underline{15.02} 
& \underline{92.34} & 12.13 & \underline{54.50} & 26.32 \\
IGR  
& 81.83 & \underline{41.93} & 55.30  & 28.61  
& \textbf{81.73} & 8.81  & 51.75  & \underline{39.20} 
& 45.57 & \underline{3.82}  & 12.38 & 28.34 
& 75.31 & 2.90  & 12.78 & 40.51 \\
IGN 
& \underline{85.87} & 36.03 & 54.89 & \textbf{21.49} 
& 81.41 & 13.07 & \underline{60.69} & 37.64 
& \textbf{79.59}	& 3.50 & 48.25 & \textbf{11.95} 
& 91.82 & 8.44 & 49.41 & \textbf{16.49} \\
DMADA
& 80.74 & 29.66 & 29.79 & 53.27
& 80.54 & 7.10 & 47.58 & 82.44
& 73.94 & 3.09 & 36.08 & 51.37
& 92.22 & 3.45 & 33.18 & 62.36 \\
IRM 
& 65.77 & 20.18 & 30.04 & 34.86
& 52.59 & 9.73 & 19.98 & 36.18 
& 60.08 & 3.83 & 20.63 & 35.05
& 83.93 & 4.55 & 7.61 & 35.85 \\
DRE 
& 85.25 & 40.84 & \underline{58.62} & \underline{23.10} 
& 80.47 & \underline{18.42} & \textbf{62.51} & \textbf{37.09} 
& 78.32 & 3.71  & \underline{48.82} & 23.73
& \textbf{93.24} & \underline{24.01} & 36.54 & \underline{25.67} \\
SGDrop 
& 77.80 & 27.87 & 39.35 & 46.11 
& 78.57 & 10.23 & 38.02 & 74.08 
& 76.47 & 3.29 & 36.05 & 49.62 
& 91.19 & 0.52 & 38.93 & 67.71 \\
EGAT
& 85.22 & \textbf{54.52} & \textbf{62.75} & 26.25 
& \textbf{81.73} & \textbf{48.34} & 57.02 & 65.51 
& 78.74 & \textbf{42.40} & \textbf{58.44} & 52.12 
& 91.24 & \textbf{55.27} & \textbf{86.47} & 51.28 \\
\bottomrule
\end{tabular}
}
\end{table*}

We empirically validate EGAT concerning predictive performance, adversarial robustness, out-of-distribution generalization, and explanation quality. 
Comparative analyses are conducted against several sota baselines on benchmark datasets.

\subsection{Datasets and Configurations}
To assess the effectiveness and generalization capability of EGAT, we perform experiments on two \textcolor{black}{OOD} generalization benchmarks following existing works \cite{li2023dre}:
%
    \textbf{Terra Incognita}~\cite{terra} comprises approximately 11{,}000 images categorized into 10 classes, distributed across four domains defined by camera trap locations: \textit{Location 38, 43, 46}, and \textit{100}.
    \textbf{VLCS}~\cite{VLCS} contains around 25{,}000 images across 5 object categories from four domains: \textit{VOC2007, LabelMe, Caltech101}, and \textit{SUN09}.
%
Each sub-dataset is split into three portions by 60\%:20\%:20\%, which will be used for training, validation, and testing, respectively. 
All experiments utilize a ResNet-50 \cite{he2016resnet} architecture initialized with ImageNet-pretrained weights.

\textit{Training Configurations}. Adam optimizer is adopted; learning rate is \(10^{-4}\); batch size is 32; training horizon is 5{,}000 steps; and
$\lambda_1, \lambda_2,\lambda_3, \lambda_4$ are set to 0.5, 1, 1, and 0.2, respectively.
Dropout is applied for all methods to avoid overfitting. 
The checkpoint with the highest validation accuracy is selected for final evaluation on the test set. 
For adversarial attacks, we use PGD with 10 steps 
and $\epsilon=0.02$ in pixel intensity scale by default, 
and more attack methods and settings of $\epsilon$ are further explored in Sec. \ref{sec:exp_adversarial_robustness}. 

\textcolor{black}{
\textit{Runing Environments}.
All experiments were conducted on a workstation with an Intel Xeon E5-2620 v4 CPU (2.10\,GHz) and an NVIDIA Tesla T4 GPU, running CentOS 7.9.2. 
The experiments were executed in a Python 3.9.19, using CUDA toolkit 10.2.89 and PyTorch 1.12.1 (torchvision 0.13.1 and torchaudio 0.9.1). For explanation generation and vision utilities, we used Captum 0.7.0 and OpenCV 4.10.0.84.
}

\subsection{Evaluation Metrics}
In addition to typical adversarial robustness metrics, \textbf{clean accuracy (cAcc)} and \textbf{adversarial accuracy (aAcc)}, we include two metrics to evaluate explanation quality.

\begin{itemize}
    \item \textbf{Comprehensiveness (Comp)}~\cite{guidedlearning} 
    quantifies the importance of identified salient regions by measuring
    the decrease in prediction when these regions are removed: 
    \begin{equation}
        Comp = f_y(x) - f_y(x \setminus \Phi_{:k}(x)),
    \end{equation}
    where $f_y(x)$ is the model's predicted probability for class $y$, and $\Phi_{:k}(x)$ denotes the explanation with the highest importance. Higher $Comp$ indicate greater predictive reliance on the highlighted features.

    \item \textbf{Sufficiency (Suff)}~\cite{guidedlearning} assesses whether the identified explanation alone suffices for the correct prediction:
    \begin{equation}
        Suff = f_y(x) - f_y(\Phi_{:k}(x)).
    \end{equation}
    Lower $Suff$ implies that the explanation retains most of the discriminative information.
\end{itemize}

\subsection{Baselines} 
We compare EGAT with several state-of-the-art baseline methods representing various strategies based on robust training (IGR, IGN, and DMADA) and explanation-guided learning (IRM, DRE, and SGDrop).

\begin{itemize}
    \item \textbf{ERM} \cite{he2016resnet} stands for \textit{empirical risk minimization} solely on classification loss over the training samples.
    \item \textbf{IGR} \cite{ross2018improving} adopts an input gradient regularization exhibiting robustness to transferred adversarial examples.
    \item \textbf{IGN} \cite{chen2019ignorm} achieves robust Integrated Gradients attributions \cite{sundararajan2017ig} to promote generalization.
    \item \textbf{DMADA} \cite{xu2020dmada} designs an \textcolor{black}{AT} framework mapping various domains to a common latent space. 
    \item \textbf{IRM} \cite{arjovsky2019irm} learns predictors invariant across environments by enforcing consistent gradients.
    \item \textbf{DRE} \cite{li2023dre} incorporates uncertainty quantification into \textcolor{black}{EGL} training phase.
    \item \textbf{SGDrop} \cite{bertoin2024sgdrop} selectively drops the \textit{most} salient features during training to improve generalization.
\end{itemize}

\subsection{Main results}
\begin{figure*}
    \centering
    \includegraphics[width=0.98\linewidth]{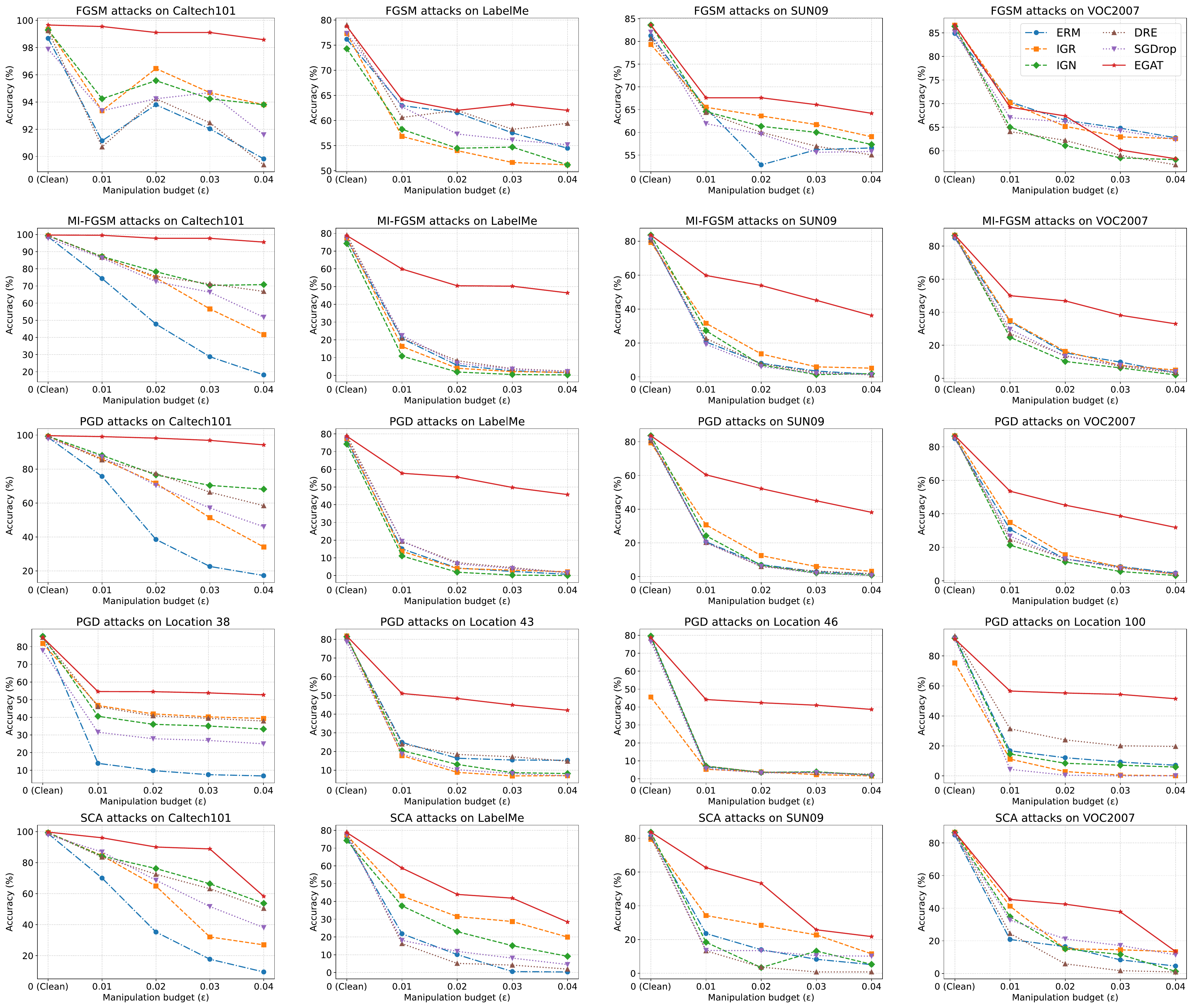}
    \caption{Adversarial robustness across different attack methods (FGSM, MI-FGSM, PGD, \textcolor{black}{SCA}) and datasets (VLCS and Terra Incognita). The figures show accuracy under varying perturbation budgets, ranging from 0 (clean) to 0.04. 
    EGAT (in red solid curves) shows significantly more stable predictions than  baselines under strong adversarial attacks.} 
    \label{fig:adv_robustness_all}
\end{figure*}

Table \ref{tab:main_results} reports \textcolor{black}{cAcc} and \textcolor{black}{aAcc} concerning \textit{prediction performance}, \textcolor{black}{Comp} and \textcolor{black}{Suff} concerning \textit{explanation quality}. 
In short, compared with six baselines,
EGAT achieves superior adversarial robustness while maintaining high standard accuracy and high explanation quality. 
Specifically, we have the following observations.

\paragraph{Standard Accuracy}
Across both VLCS and Terra Incognita benchmarks, EGAT consistently matches or exceeds the cAcc of state-of-the-art baselines, achieving the best performance on Caltech101 (99.66\%), SUN09 (83.61\%), and Location 43 (81.73\%), and has comparable performance in other domains. 
This demonstrates that the addition of explanation-guided constraints does not compromise general predictive performance. 
Their predictive performance (cAcc) under out-of-distribution scenarios is discussed in Sec. \ref{sec:exp_ood_generalization}.

\paragraph{Adversarial Robustness}
EGAT demonstrates a substantial improvement in aAcc over \textbf{all} baselines. For instance, on Caltech101, EGAT achieves an aAcc of 98.23\%, far surpassing DRE (77.43\%) and IGR (71.68\%). 
Similarly, on the Terra Incognita benchmark, EGAT consistently delivers the highest adversarial robustness across all locations.
It is worth noting that on LabelMe and Location 46, all baselines cannot achieve 10\% accuracy under attacks, while EGAT maintains 55.66\% and 42.40\%, respectively, highlighting its effectiveness in defending against adversarial perturbations. 
These gains suggest that adversarial regularizations lead the model to base decisions on more robust, causally relevant features rather than brittle spurious correlations. 
We will further evaluate their adversarial robustness under various manipulation methods and strengths in Sec. \ref{sec:exp_adversarial_robustness}.

\paragraph{Explanation Quality}
Notably, EGAT consistently yields the highest or near-highest Comp scores across many settings, suggesting that the model effectively avoids reliance on irrelevant features, such as spurious background cues, during decision-making. 
For example, on Location 46 and Location 100 in Terra Incognita, EGAT achieves Comp scores 58.44 and 86.47, respectively, demonstrating its strong reliance on causally relevant regions for prediction. 

Simultaneously, EGAT also attains relatively low Suff scores compared to most baselines (e.g., 1.42 on Caltech101 and 13.76 on LabelMe) in some domains. 
However, there are instances where EGAT also produces high Suff scores, which may indicate cases where the model captures features within the object that are not the most discriminative for the task at hand. 
For example, while EGAT may identify the paws of a cat, these may contribute less significantly to classification compared to more salient features like the cat's fur texture or facial structure.

These observations highlight a potential future direction for improvement: enhancing EGAT's focus on the most informative subregions within causally relevant objects to further strengthen interpretability and predictive sufficiency.

\paragraph{Limitations} Despite its strong overall performance, especially predictive accuracy under adversarial attacks,
EGAT does not always achieve the best score across all individual metrics or domains. 
For example, in a few cases, models like DRE or IGR outperform EGAT on specific interpretability metrics. 
We hypothesis it for two reasons: 
(1) One possible reason is that EGAT introduces additional explanation-guided losses that may trade off with raw predictive accuracy in edge cases where spurious features happen to be predictive within the training domain. 
(2) Moreover, EGAT's reliance on saliency-based attributions might be affected by noise or local non-linearity in deep networks, potentially limiting the quality of its guidance in complex scenes. 

For the following experiments, we include all baselines from Table~\ref{tab:main_results} except DMADA and IRM, as these two methods do not demonstrate strong performance across domains and metrics.

\subsection{Robustness to Adversarial Attacks}
\label{sec:exp_adversarial_robustness}

\begin{table*}[t]
\centering
\caption{Clean accuracy and corresponding ranking (in brackets) under OOD evaluation across different train-test settings.
The average and variance of rankings across various train-test settings are disclosed in the last column.}
\resizebox{\textwidth}{!}{
\begin{tabular}{l|ccc|ccc|ccc|ccc|c}
\toprule
\textbf{Train} on 
& \multicolumn{3}{c|}{Caltech101} 
& \multicolumn{3}{c|}{LabelMe} 
& \multicolumn{3}{c|}{SUN09} 
& \multicolumn{3}{c|}{VOC2007}
& Ranking \\
\textbf{Test} on
& LabelMe & SUN09 & VOC2007 
& Caltech101 & SUN09 & VOC2007 
& Caltech101 & LabelMe & VOC2007 
& Caltech101 & LabelMe & SUN09 
& Avg. $\pm$ Var. \\
\midrule
ERM 
& 28.11 [5] & 34.66 [5] & 48.01 [2]
& 91.59 [4] & 56.40 [4] & 66.55 [5]
& 73.45 [2] & 57.54 [6] & 63.14 [5] 
& 94.34 [6] & 52.64 [6] & 71.49 [4] 
& 4.50 ${\scriptstyle \pm 1.75}$\\

IGR 
& 33.01 [3] & 43.04 [1] & 52.96 [1]
& 86.28 [6] & 54.28 [6] & 64.44 [6]
& 69.91 [3] & 60.37 [5] & 58.14 [6]
& 97.87 [2] & 62.50 [5] & 79.80 [1] 
& 3.75 ${\scriptstyle \pm 4.19}$ \\

IGN 
& 26.65 [6] & 33.52 [6] & 43.70 [5]
& 88.49 [5] & 55.61 [5] & 70.18 [1]
& 67.25 [4] & 64.38 [3] & 63.51 [4]
& 97.34 [5] & 64.15 [4] & 76.57 [2] 
& 4.17 {$\scriptstyle \pm 2.14$} \\

DRE 
& 30.94 [4] & 34.75 [4] & 45.22 [4]
& 93.36 [2] & 62.34 [2] & 67.40 [3]
& 34.75 [6] & 62.34 [4] & 70.48 [2]
& 97.58 [4] & 67.40 [1] & 61.45 [6]
& 3.50 {$\scriptstyle \pm 2.25$} \\

SGDrop 
& 34.19 [2] & 40.85 [2] & 42.96 [6] 
& 92.47 [3] & 64.63 [1] & 67.22 [4]
& 40.85 [5] & 64.63 [2] & 75.76 [1]
& 98.67 [1] & 67.22 [2] & 61.85 [5] 
& 2.83 {$\scriptstyle \pm 2.81$} \\

EGAT 
& 38.15 [1] & 39.04 [3] & 46.59 [3] 
& 94.34 [1] & 57.33 [3] & 69.77 [2]
& 74.20 [1] & 67.68 [1] & 66.22 [3] 
& 97.87 [2] & 64.62 [3] & 74.66 [3]
& \textbf{2.17} {$\scriptstyle \pm 0.81$} \\

\bottomrule
\end{tabular}
}
\label{tab:ood_acc_rearranged}
\end{table*}

To further analyze EGAT and other baselines adversarial robustness, 
\textcolor{black}{we conduct four different types of attacks, including FGSM \cite{goodfellow2014fgsm}, MI-FGSM \cite{dong2018mifgsm}, PGD \cite{madry2017pgd}, and SCA \cite{pan2024sca}.}
The manipulation budget $\epsilon$ ranges from 0.01 to 0.04.
Based on the results reported in Fig. \ref{fig:adv_robustness_all}, 
we have the following analysis.

\textcolor{black}{FGSM}
\cite{goodfellow2014fgsm} is a one-step gradient-based attack. 
In our experiments, all models suffer only moderate drops under FGSM, where most models can keep at least 50\% adversarial accuracy under the severest FGSM manipulations. 
Compared to baselines, EGAT consistently retains the highest accuracy at almost every perturbation level. 
Especially in Caltech101, EGAT maintains nearly clean-level accuracy even when $\epsilon=0.04$, whereas baselines like ERM
and DRE fall off fast as $\epsilon$ grows. 
This suggests EGAT's learned features are less sensitive to the single-step noise, likely because its training emphasizes invariant causal cues rather than spurious correlations. 
In contrast, other methods rely more on superficial patterns that even a weak FGSM can exploit. 

\textcolor{black}{MI-FGSM}
\cite{dong2018mifgsm} injects momentum into iterative FGSM, making it a much stronger adversary. 
In line with prior findings, 
this attack causes far steeper accuracy drops than FGSM, 
where most baselines have plunged nearly to zero adversarial accuracy when $\epsilon$ is 0.03.
In contrast, EGAT outperforms all baselines across all datasets on VLCS. 
Specifically, in the case of $\epsilon=0.03$, EGAT retains around at least half the accuracy of the clean model across datasets. 
The robustness trend between MI-FGSM and FGSM aligns with previous work \cite{dong2018mifgsm} showing MI-FGSM's known potency: ``integrating momentum, MI-FGSM outperforms FGSM significantly''.

\textcolor{black}{PGD}
\cite{madry2017pgd} is an iterative multi-step attack widely regarded as one of the most potent first-order methods. 
On VLCS, PGD induces steep accuracy declines for all baseline methods. 
Nevertheless, EGAT remains far more robust than any competitor. 
The robustness trend is quite similar to those under MI-FGSM attacks. 
%
A similar pattern is observed on Terra Incognita (Location 38/43/46/100). 
EGAT yields around 10–20\% accuracy at high $\epsilon$, whereas most baselines are essentially at 0\%. 
These results highlight that EGAT’s learned features withstand even very strong iterative perturbations by grounding predictions in invariant cross-domain cues. 

\textcolor{black}{
SCA \cite{pan2024sca} leverages DDPM inversion to improve semantic consistency in adversarial manipulations. 
Compared to gradient-only attacks, SCA can generate perturbations that are more semantically coherent and thus can expose different vulnerabilities than $\ell_p$-bounded optimization alone.
SCA leads to substantial degradation for all methods as $\epsilon$ increases, while EGAT consistently achieves the best or near-best performance across datasets and budgets.
This result suggests that EGAT improves resilience to both standard first-order attacks and semantically consistent perturbations.
}

In summary, while PGD \textcolor{black}{and SCA} attacks drastically reduce performance for everyone (as theory predicts for such an aggressive attack), EGAT consistently outperforms other methods. Its robustness under various attacks is the highest across datasets, validating that the proposed EGAT architecture confers significant adversarial resilience.
Notably, however, no model is fully immune: as $\epsilon$ increases under PGD or MI-FGSM, EGAT's accuracy also drops dramatically (albeit more slowly), underscoring that extremely powerful attacks can eventually overcome domain-generalization defenses.

\subsection{Generalization under Out of Distribution}
\label{sec:exp_ood_generalization}

To evaluate the models’ capability to generalize under distribution shifts, Table~\ref{tab:ood_acc_rearranged} reports the \textcolor{black}{cAcc} of EGAT and baseline methods under \textcolor{black}{OOD} settings on VLCS, where the training and testing domains differ while label spaces remain consistent. 
To provide a holistic view of generalization ability, we also report each model's rank per setting (in brackets), and the mean and variance of rankings across all cases.

\paragraph{Overall OOD Generalization Trends}
EGAT demonstrates consistently strong OOD generalization across most train-test pairs. 
For instance, when trained on \textit{Caltech101} and tested on \textit{LabelMe}, EGAT achieves 38.15\%, outperforming the runner-up SGDrop (34.19\%) by an 11.6\% relative improvement. 
Similarly, when trained on \textit{SUN09} and tested on \textit{LabelMe}, EGAT attains 67.68\%, outperforming IGN (64.38\%) with a 5.1\% relative gain. 
Across all settings, EGAT secures the best average rank (2.17), significantly ahead of the next best SGDrop (2.83), indicating superior adaptability to distributional shifts.
These gains align with the design intuition of EGAT: by enforcing explanation consistency during training, the model is encouraged to rely on causally relevant, semantically aligned, and domain-invariant features, which aid generalization to unseen target domains under covariate shifts. 

\paragraph{Consistency across Different Settings}
Besides EGAT demonstrates overall strong OOD generalization, an examination of \textit{rankings} across different settings reveals its remarkable \textit{consistency}. 
As shown in Table~\ref{tab:ood_acc_rearranged}, EGAT consistently ranks within the \textit{top three} positions across all train-test settings, never dropping below rank 3 in any scenario. 
The runner-up SGDrop achieves a competitive average rank of 2.83, and could beat EGAT in few times. 
However, SGDrop exhibits significant fluctuations, falling to rank 5 or 6 in several settings (e.g., \textit{Caltech101} $\rightarrow$ \textit{SUN09}, \textit{VOC2007} $\rightarrow$ \textit{SUN09}, and \textit{SUN09} $\rightarrow$ \textit{Caltech101}). 
The variance of rankings delivers a straightforward comparison: 
EGAT consistently holds the top place (with a variance of 0.81), while SGDrop generally goes up and down (with a variance of 2.81).
Similarly, other baselines such as IGR and DRE show considerable variability, oscillating between top-three ranks in some cases while easily dropping to lower ranks (5 or 6) in others.

This ranking stability of EGAT underscores its robustness and adaptability under diverse domain shifts.
This consistency aligns with Lemma~\ref{lemma:ood_explanation_stability}, illustrating that enforcing explanation-guided consistency fosters reliance on stable, semantically meaningful features that generalize reliably across domains, even under covariate shifts.

\paragraph{Limitations and Failure Cases}
\textcolor{black}{
While EGAT attains the most consistent overall performance across train-test domain shifts (average ranking $2.17$ with variance $0.81$), there exist specific OOD settings where another baseline achieves higher clean accuracy. 
We hypothesize two main reasons: 
(1) EGAT relies on saliency-based attributions that may be noisy or unstable in complex scenes such as cluttered backgrounds, which can weaken the reliability of the explanation-guidance signal and occasionally amplify spurious correlations. 
We further explore its potential solution in Sec. \ref{tab:sensitivity_analysis}.
(2) EGAT encourages explanation consistency and alignment across perturbations, which may reduce model flexibility under domain shifts where the \emph{causal features} associated with a label vary substantially across domains. 
In such cases, Assumptions \ref{assumption:input_decomposition} and \ref{assumption:label_soly_depend_on_obj} may be violated, and enforcing a stable attribution pattern can bias the model toward features that are not transferable across domains. 
}


\subsection{Sensitivity Analysis}
{\color{black}
To further study the failure cases in Sec. \ref{sec:exp_ood_generalization}, we focus on the second hypothesis regarding ``noisy saliency'' issue. 
We conduct two sets of analyses: the effect of \textit{explanation methods}, and the effect of the \textit{guide-map quality}.

\paragraph{Effect of explanation method}
Since EGAT is agnostic to the specific attribution mechanism, we examine the impact of the explanation method by comparing EGAT(base), which uses Grad-CAM, against three variants that replace the explanation module with SmoothGrad (SG)~\cite{smilkov2017smoothgrad}, Integrated Gradients (IG)~\cite{sundararajan2017ig}, and Finer-CAM~\cite{zhang2025finercam}, respectively.
As shown in Table~\ref{tab:sensitivity_analysis}, EGAT with SG improves OOD accuracy on LabelMe (38.15$\rightarrow$40.51), SUN09 (39.04$\rightarrow$44.50), and VOC2007 (46.59$\rightarrow$50.20) compared to EGAT (base), and similar improvements can also be observed for IG and Finer-CAM. 
These results suggest that EGAT remains effective across various attribution methods and can benefit from explanation methods that yield less noisy saliency maps.

\paragraph{Effect of guide-map quality}
We further evaluate EGAT's sensitivity to the quality of the guide map $M(x)$.
We fix Grad-CAM as the default explanation module and perturb $M(x)$ by injecting Gaussian noise with two levels, i.e., $N(0,0.01^2I)$ and $N(0,0.5^2I)$.
Table~\ref{tab:sensitivity_analysis} shows that mild noise ($N(0,0.01^2I)$) has only a limited impact, as performance remains close to EGAT (base), whereas heavier noise ($N(0,0.5^2I)$) leads to a clear performance drop.
Finally, we report an oracle-quality setting by replacing $M(x)$ with human-labeled (``golden'') guides.
This substantially improves OOD accuracy (e.g., SUN09: 39.04$\rightarrow$45.44; VOC2007: 46.59$\rightarrow$56.80), demonstrating that higher-quality guidance can further strengthen EGAT and providing an empirical upper bound when reliable semantic guides are available.

\begin{table}[ht]
\centering
{\color{black}
\caption{
Sensitivity analysis under OOD evaluation: models are trained on VOC2007 and tested on all others. EGAT (base) uses Grad-CAM; SG/IG/Finer-CAM replace the explanation module. The last three rows vary the guide map quality.
}
\label{tab:sensitivity_analysis}
\begin{tabular}{l|c|c|c|c}
\toprule
\textbf{Model} & \textbf{Caltech101} & \textbf{LabelMe} & \textbf{SUN09} & \textbf{VOC2007} \\
\midrule
IGR & 99.33 & 33.01 & 43.04 & 52.96 \\
SGDrop & 99.31 & 34.19 & 40.85 & 42.96 \\
\midrule
EGAT (base) & 99.66 & 38.15 & 39.04 & 46.59 \\
\textit{w} SG & 99.54 & 40.51 & 44.50 & 50.20 \\
\textit{w} IG & 99.68 & 35.79 & 42.44 & 50.94 \\
\textit{w} Finer-CAM & 99.47 & 38.49 & 44.44 & 45.16 \\
\midrule
\textit{w} $N(0, 0.01^2I)$ & 99.63 & 38.02 & 38.58 & 46.55 \\
\textit{w} $N(0, 0.5^2I)$ & 99.29 & 36.86 & 37.42 & 40.96 \\
\textit{w} golden & 99.76 & 40.19 & 45.44 & 56.80 \\
\bottomrule
\end{tabular}
}
\end{table}
}

\subsection{Ablation Study}
\label{sec:ablation_study}

\begin{table*}[ht]
\centering
\caption{Ablation results on \textbf{VLCS} (top) and \textbf{Terra Incognita} (bottom). 
}
\label{tab:ablation_study}
\resizebox{\textwidth}{!}{
\begin{tabular}{l|cccc|cccc|cccc|cccc}
\toprule
\textbf{Model} & \multicolumn{4}{c|}{\textbf{Caltech101}} & \multicolumn{4}{c|}{\textbf{LabelMe}} & \multicolumn{4}{c|}{\textbf{SUN09}} & \multicolumn{4}{c}{\textbf{VOC2007}} \\
& cAcc & aAcc & Comp & Suff & cAcc & aAcc & Comp & Suff & cAcc & aAcc & Comp & Suff & cAcc & aAcc & Comp & Suff \\
\midrule
ERM & 98.68 & 38.51 & 0.48  & 38.69 & 76.17 & 4.24  & 13.97 & 12.78 & 81.25 & 7.04  & 18.63 & 27.16 & 84.88 & 12.96 & 18.47 & 18.77 \\
\textit{w/o} EGL & 99.64 & 97.34 & 3.40 & 23.68 & 74.05 & 55.66 & 26.28 & 8.23 & 82.15 & 55.23 & 23.49 & 25.22 & 86.29 & 47.59 & 24.81 & 19.29 \\
\textit{w/o} AT & 99.98 & 63.27 & 28.53 & 28.22 & 80.18 & 11.32 & 14.02 & 14.82 & 82.58 & 7.42  & 19.90 & 25.62 & 86.90 & 17.22 & 19.21 & 15.15 \\
EGAT & 99.66 & 98.23 & 9.65  & 1.42  & 78.74 & 55.66 & 19.71 & 13.76 & 83.61 & 52.19 & 16.40 & 34.19 & 86.66 & 45.18 & 25.70 & 29.64 \\
\bottomrule
\end{tabular}
}

\vspace{1em}

\resizebox{\textwidth}{!}{
\begin{tabular}{l|cccc|cccc|cccc|cccc}
\toprule
\textbf{Model} & \multicolumn{4}{c|}{\textbf{Location 38}} & \multicolumn{4}{c|}{\textbf{Location 43}} & \multicolumn{4}{c|}{\textbf{Location 46}} & \multicolumn{4}{c}{\textbf{Location 100}} \\
& cAcc & aAcc & Comp & Suff & cAcc & aAcc & Comp & Suff & cAcc & aAcc & Comp & Suff & cAcc & aAcc & Comp & Suff \\
\midrule
ERM & 86.02 & 9.82  & 54.86 & 23.59 & 80.35 & 16.37 & 56.44 & 38.42 & 79.06 & 3.61  & 43.43 & 15.02 & 92.34 & 12.13 & 54.50 & 26.32 \\
\textit{w/o} EGL & 82.53 & 54.33 & 63.49	& 30.81 & 81.10 & 49.60 & 60.45 & 41.92 & 79.48 & 42.82 & 37.73	& 33.12 & 91.16 & 62.40 & 59.66 & 43.17 \\
\textit{w/o} AT & 87.19 & 36.22 & 63.70 & 20.10 & 83.75 & 18.89 & 67.58 & 35.64 & 79.48 & 4.35 & 49.52 & 8.84  & 93.88 & 26.64 & 55.18 & 18.67 \\
EGAT & 85.22 & 54.52 & 62.75 & 26.25 & 81.73 & 48.34 & 57.02 & 65.51 & 78.74 & 42.40 & 58.44 & 52.12 & 91.24 & 55.27 & 86.47 & 51.28 \\
\bottomrule
\end{tabular}
}
\end{table*}

To understand the individual contributions of each component within EGAT, we conduct a systematic ablation study focusing on its two core objectives: adversarial training loss (\(\mathcal{L}_{adv}\)) and explanation-guided loss (\(\mathcal{L}_{egl}\)). 
To isolate the effects of these components, we retain the classification loss ($\mathcal{L}_{cls}$) and the regularization term ($\mathcal{L}_{reg}$) for unchanged throughout this study. 
Specifically, we evaluate the following variants of EGAT:

\begin{itemize}
    \item \textbf{ERM (w/o EGL \& AT):} Standard empirical risk minimization without \textcolor{black}{AT} nor \textcolor{black}{EGL} components.
    \item \textbf{w/o EGL:} EGAT without \textcolor{black}{EGL} loss $\mathcal{L}_{egl}$ ($\lambda_2=0$).
    
    \item \textbf{w/o AT:} EGAT without \textcolor{black}{AT} loss $\mathcal{L}_{at}$ ($\lambda_1=0$).
    \item \textbf{EGAT (full):} The proposed method using both \textcolor{black}{AT} and \textcolor{black}{EGL} loss.
\end{itemize}

Following settings in Table \ref{tab:main_results}, we evaluate each ablation version on both the VLCS and Terra Incognita benchmarks using \textcolor{black}{cAcc}, \textcolor{black}{aAcc}, \textcolor{black}{Comp}, and \textcolor{black}{Suff} to comprehensively assess predictive performance, robustness, and explanation quality. 

Based on results reported on Table \ref{tab:ablation_study}, we have the following observations and conclusions: 

\paragraph{Contribution of Explanation Guided Loss} 
Comparing w/o EGL with full EGAT reveals that the removal of explanation guidance may lead to higher adversarial accuracy, but at the cost of explanation quality. 
For example, on Location 46, w/o EGL attains 42.82\% aAcc (slightly higher than EGAT's 42.40\%) but shows a significantly lower Comp score (37.73 vs. 58.44 for EGAT), indicating less effective use of causally relevant features during prediction. 
This trade-off highlights that while \textcolor{black}{AT} alone can enhance robustness, \textcolor{black}{EGL} encourages the model to rely on semantically meaningful features, which may introduce slight compromises in adversarial performance but significantly improves interpretability.

\paragraph{Effectiveness of Adversarial Training} 

\textcolor{black}{
The AT-only variant (w/o EGL) can attain adversarial robustness comparable to EGAT in some cases, but typically sacrifices explanation quality. 
For example, on Caltech101, w/o EGL achieves 97.34\% aAcc versus 98.23\% for EGAT, yet EGAT yields a substantially higher Comp (9.65 vs. 3.40) and lower Suff (1.42 vs. 23.68), suggesting that EGAT more consistently bases its decisions on salient and semantically meaningful regions. 
These results indicate that when explanations are not required downstream, the AT-only baseline may be a reasonable choice, since it avoids the additional saliency-consistency computations introduced by $\mathcal{L}_{egl}$ while still providing strong robustness in adversarial environments.
}

\textcolor{black}{
Conversely, in \emph{safety-critical or audit-driven applications} (e.g., medical imaging decision support \cite{liu2024xfmp}, financial system debugging \cite{weber2024applications}), EGAT should be preferred because robustness alone is insufficient: stable and faithful explanations are essential for human verification, error analysis, and regulatory scrutiny.
Moreover, beyond improved interpretability, EGAT exhibits stronger robustness than multiple AT baselines under OOD scenarios (as discussed above), reinforcing its suitability for deployments where both distribution shift and accountability are central concerns.
}

\paragraph{Summary}
This ablation confirms that EGAT's strength lies in seamlessly integrating \textcolor{black}{AT} and explanation guidance, leading to models that are not only robust but also interpretable under challenging conditions.
These insights guide practical deployment: practitioners prioritizing robustness and interpretability in sensitive domains, such as medical imaging and autonomous driving, will benefit most from the full EGAT setup, while understanding the potential trade-offs introduced by each component.

\subsection{Runtime Analysis}
{\color{black}
We further compare EGAT with baselines concerning computational overhead. 
Unless otherwise specified, all methods adopt the same backbone (ResNet-50) and inference pipeline; therefore, the \emph{inference} cost for validation/test is identical across methods, and the main computational difference lies in \emph{training}. 
In addition to EGAT with ResNet-50, we also consider two lightweight variants by replacing the backbone with MobileNetV2 \cite{sandler2018mobilenetv2} and EfficientNet \cite{tan2019efficientnet}, respectively.

Fig.~\ref{fig:time_vs_aacc} reports aAcc versus per-epoch training time (averaged over 100 epochs). 
We highlight the Pareto-optimal models which are not dominated in terms of both robustness and training cost.
Overall, EGAT-ResNet delivers higher robustness at a moderate training-time increase. 
For example, EGAT-ResNet achieves an improvement of $+$37\% on aAcc requiring only a limited increase ($+16\%$) in per-epoch training time.
In contrast, several baselines (e.g., IRM and IGN) incur considerable training overhead but still exhibit markedly lower aAcc, indicating that additional computation alone does not guarantee robustness improvements.
Meanwhile, EGAT-MobileNetV2 and EGAT-EfficientNet further reduce the per-epoch training time and lie on (or near) the Pareto front, offering a practical option when training budgets are constrained.
}

\begin{figure}[t]
    \centering
    \includegraphics[width=0.95\linewidth]{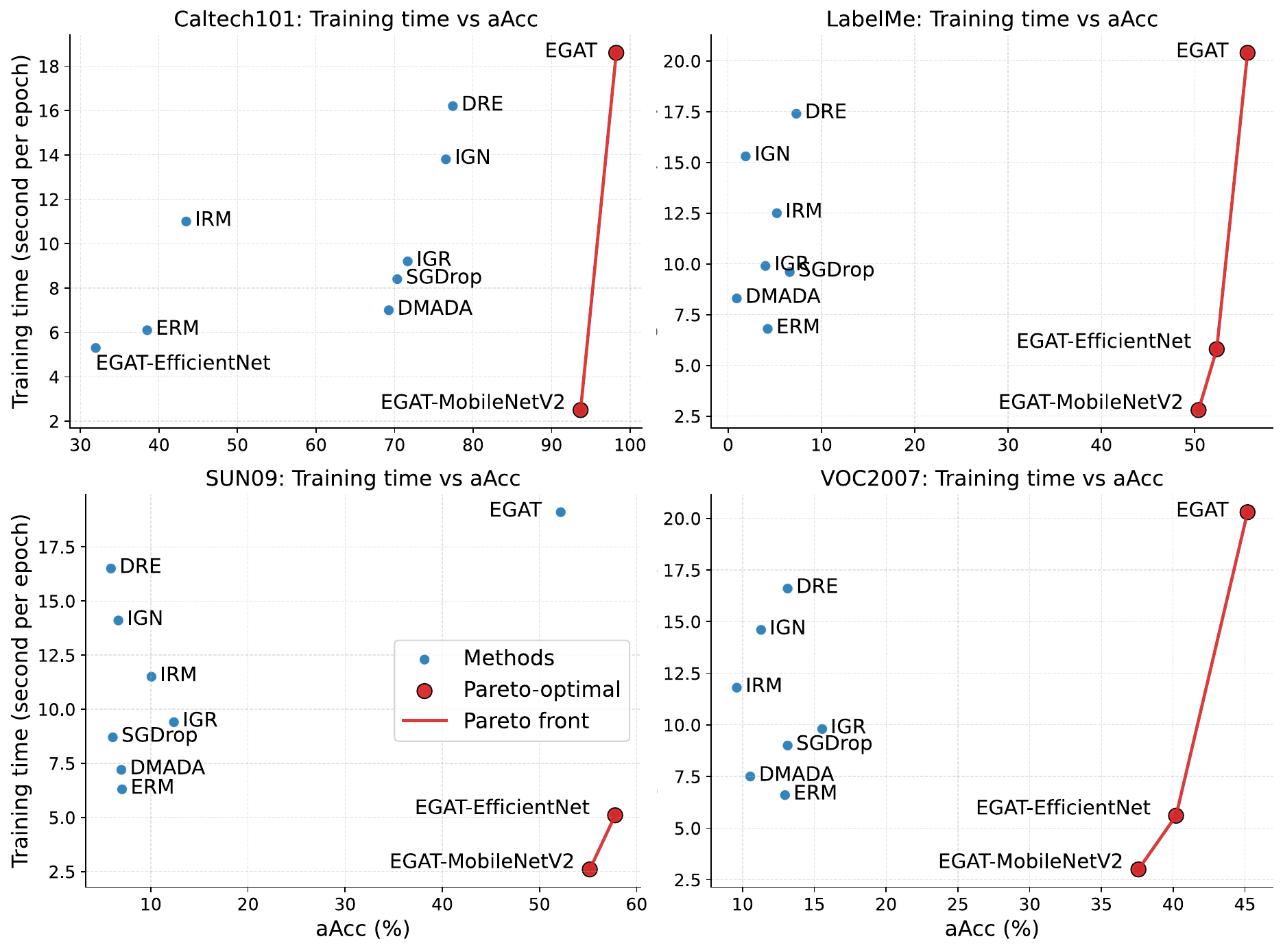}
    \caption
    {Adversarial accuracy \textit{vs.} training time per epoch. Red points denote Pareto-optimal methods and the red curve denotes the Pareto front.}
    \label{fig:time_vs_aacc}
\end{figure}

\subsection{Case Study: Explanation Quality}
\label{sec:case_study}

\begin{figure}[t]
    \centering
    \includegraphics[width=0.95\linewidth]{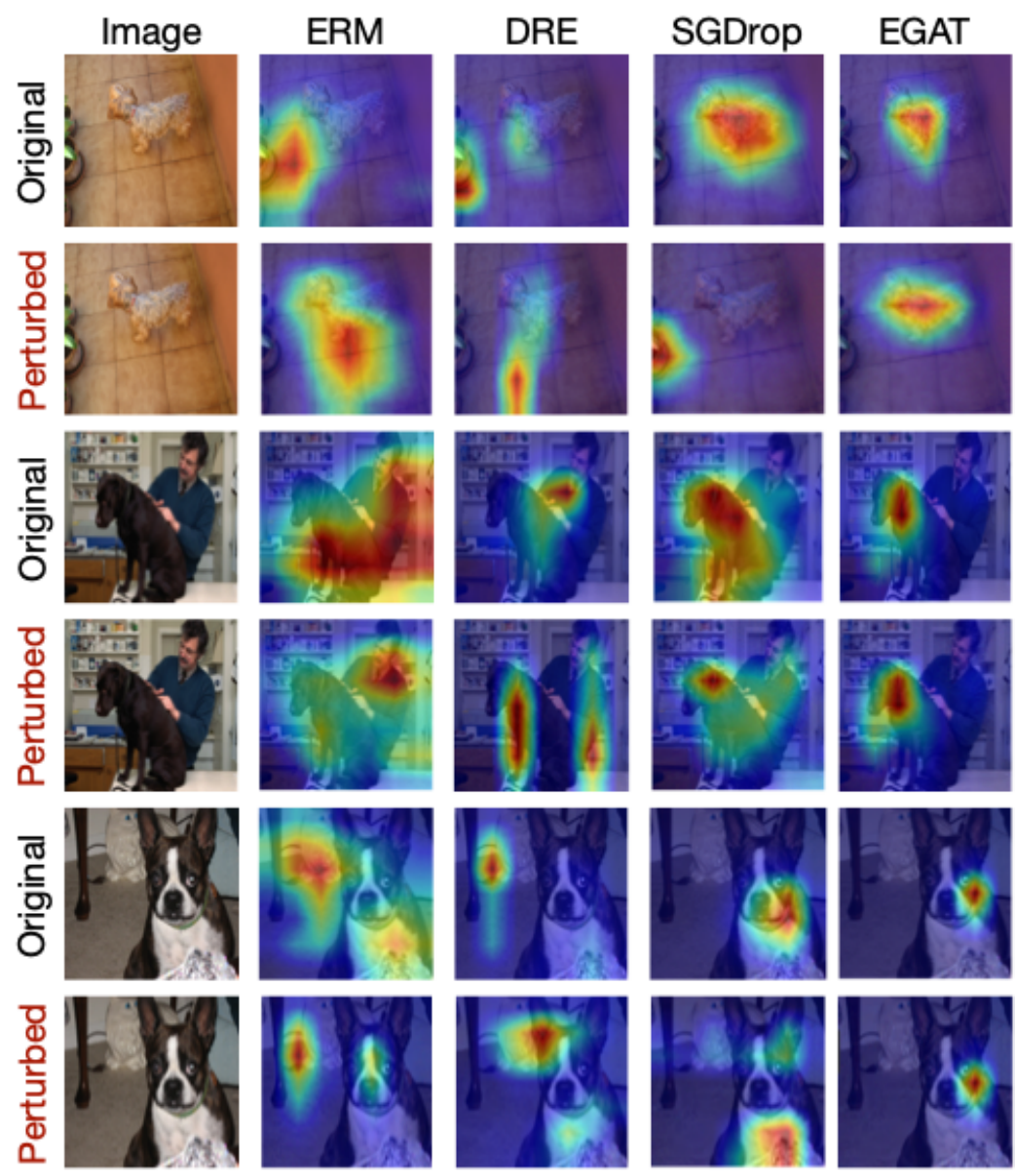}
    \caption
    {Visualization of explanation heatmaps under clean and adversarial conditions.
    We present Grad-CAM heatmaps of four models (columns: ERM, DRE, SGDrop, and EGAT) for three images before and after PGD perturbations. 
    These visualizations illustrate where each model attends when making predictions and how their explanations change under adversarial attacks. }
    \label{fig:case_study}
\end{figure}

Table~\ref{tab:main_results} quantitatively evaluates explanation quality using comprehensiveness and sufficiency metrics. 
To complement these results, we present three qualitative case studies from VLCS in Fig. \ref{fig:case_study}, providing intuitive visual comparisons. 
For each example, we display Grad-CAM heatmaps generated by each model both before and after adversarial perturbations, illustrating the regions each model prioritizes during prediction. 
Specifically, Fig. \ref{fig:case_study} compares ERM, DRE, SGDrop, and EGAT, revealing their respective attention distributions. 

To exemplify these observations, we discuss the first image containing a small dog on the floor (top two rows) in detail. 
PGD attacks \cite{madry2017pgd} are applied to each image to assess explanation stability and robustness alignment under adversarial conditions, providing a comprehensive assessment of interpretability under realistic threat models.

\paragraph{Explanation Heatmap for Original Image}
Heatmaps of ERM and DRE are fragmented and misaligned, with attention scattered on irrelevant regions of the floor or background.
These methods often fail to capture the dog as the central causal feature, indicating a reliance on spurious correlations.
The heatmap of SGDrop shows tight, concentrated attention on the dog's body, correctly identifying the object as the main feature for prediction. 
EGAT produces a even more compact and precise heatmap centered on the dog.
Both heatmaps demonstrate strong clean interpretability, aligning well with human understanding.

\paragraph{Explanation Heatmap Analysis under Adversarial Perturbations}
Under adversarial perturbations,
the heatmaps of ERM and DRE become even more scattered, often focusing on irrelevant tiles or edges unrelated to the object.
While providing clean focus under clean settings, SGDrop's heatmap collapses under adversarial noise, losing object focus and highlighting irrelevant regions. 
Remarkably, EGAT maintains a consistent, object-centered heatmap even under adversarial perturbations.
The explanation remains focused on the dog's torso, demonstrating robust interpretability and resilience to adversarial noise.

\paragraph{Trends are Consistent across Images}
For the second image (small dog on the tiled floor) and the third image (dog on examination table with a person), consistent trends are observed across methods. 
ERM and DRE frequently distract to background, such as the cat in the background or the examiner, failing to capture the object reliably under both clean and adversarial settings. 
EGAT and SGDrop again shows tighter, object-centric attention, focusing precisely on the dog’s body while ignoring the tile background, aligning better with human-intuitive explanations.
More importantly, 
EGAT uniquely enforces semantic consistency, resulting in stable and meaningful heatmaps aligned with object regions critical for decisions even facing adversarial attacks.

\section{Conclusion}
\subsection{Limitations and Discussion}
While the theoretical insights and empirical results presented in this work demonstrate the effectiveness of our approach in enhancing both robustness and interpretability, several limitations merit discussion: 

{\color{black}
\textbf{Dependence on explanation method.}
To ensure comparability with prior work, the main implementation of EGAT adopts Grad-CAM as the default explanation module, and we additionally evaluate representative gradient-based attribution methods (e.g., SmoothGrad, Integrated Gradients, and Finer-CAM). 
%
For \textit{non} gradient-based explanation paradigms (e.g., counterfactual explanations \cite{goyal2019counterfactual}), EGAT can be extended in several ways.
(1) For counterfactual-style explanations that output a minimal edit $\delta$ (rather than a saliency map), the stability objective can be reformulated to penalize the discrepancy between counterfactual edits for $x$ and $x^{adv}$ (e.g., $\|\delta -\delta^{adv}\|$). 
(2) As for other intractable explanation methods, the explanation operator can be replaced by a \emph{surrogate differentiable} approximation \cite{ribeiro2016lime}, such as a learned explainer network trained to mimic the target explanation signal. EGAT can then optimize robustness while enforcing consistency in the surrogate explanation space. 

\textbf{Sensitivity to guide-map quality.}
As shown in sensitivity analysis, degrading the guide map can reduce the benefit of EGAT, and under severe domain shifts the constraint may become over-restrictive when the causal evidence for a class varies across domains. 
A direct remedy is to make the explanation-guided term \emph{adaptive} rather than using a fixed weighting throughout training.
For example,
down-weighting $\mathcal{L}_{egl}$ when the model exhibits high uncertainty on perturbed samples \cite{patro2019u}, thereby preventing the training objective from enforcing potentially unreliable attribution patterns.
}

\subsection{Summary and Future Directions}

In this paper, we introduced Explanation-Guided Adversarial Training, a unified training framework that integrates adversarial robustness with explanation-guided learning. 
%
Motivated by theoretical considerations and supported by comprehensive empirical studies, EGAT demonstrates strong performance across two domain generalization benchmarks, achieving high adversarial accuracy, stable out-of-distribution generalizability, and producing faithful explanations. 

In future work, we plan to investigate the extension of EGAT to other explanation paradigms (e.g., counterfactual-based explanations or attention mechanisms), its applicability to large-scale models and vision-language settings, and the development of more efficient training procedures.

\bibliographystyle{IEEEtran}
\bibliography{main}

@String{Computing = "Computing" }

@String{Computer = "{IEEE} Computer" }

@String{Springer = "Springer-Verlag" }

@inproceedings{guesmi2024asgt,
  title={Exploring the Interplay of Interpretability and Robustness in Deep Neural Networks: A Saliency-Guided Approach},
  author={Guesmi, Amira and Aswani, Nishant Suresh and Shafique, Muhammad},
  booktitle={2024 IEEE International Conference on Image Processing Challenges and Workshops (ICIPCW)},
  pages={4066--4072},
  year={2024},
  organization={IEEE}
}

@inproceedings{li2023dre,
  title={Are Data-driven Explanations Robust against Out-of-distribution Data?},
  author={Li, Tang and Qiao, Fengchun and Ma, Mengmeng and Peng, Xi},
  booktitle={Proceedings of the IEEE/CVF Conference on Computer Vision and Pattern Recognition},
  pages={3821--3831},
  year={2023}
}

@article{weber2024applications,
  title={Applications of explainable artificial intelligence in finance—a systematic review of finance, information systems, and computer science literature},
  author={Weber, Patrick and Carl, K Valerie and Hinz, Oliver},
  journal={Management Review Quarterly},
  volume={74},
  number={2},
  pages={867--907},
  year={2024},
  publisher={Springer}
}

@inproceedings{rieger2020cdep,
  title={Interpretations are useful: penalizing explanations to align neural networks with prior knowledge},
  author={Rieger, Laura and Singh, Chandan and Murdoch, William and Yu, Bin},
  booktitle={International conference on machine learning},
  pages={8116--8126},
  year={2020},
  organization={PMLR}
}

@inproceedings{ross2018improving,
  title={Improving the adversarial robustness and interpretability of deep neural networks by regularizing their input gradients},
  author={Ross, Andrew and Doshi-Velez, Finale},
  booktitle={Proceedings of the AAAI conference on artificial intelligence},
  volume={32},
  number={1},
  year={2018}
}

@article{ross2017right,
  title={Right for the right reasons: Training differentiable models by constraining their explanations},
  author={Ross, Andrew Slavin and Hughes, Michael C and Doshi-Velez, Finale},
  journal={arXiv preprint arXiv:1703.03717},
  year={2017}
}

@article{ismail2021sgt,
  title={Improving deep learning interpretability by saliency guided training},
  author={Ismail, Aya Abdelsalam and Corrada Bravo, Hector and Feizi, Soheil},
  journal={Advances in Neural Information Processing Systems},
  volume={34},
  pages={26726--26739},
  year={2021}
}

@article{bertoin2024sgdrop,
  title={The Overfocusing Bias of Convolutional Neural Networks: A Saliency-Guided Regularization Approach},
  author={Bertoin, David and Sanchez, Eduardo Hugo and Zouitine, Mehdi and Rachelson, Emmanuel},
  journal={arXiv preprint arXiv:2409.17370},
  year={2024}
}

@article{guidedlearning,
  title={Going beyond xai: A systematic survey for explanation-guided learning},
  author={Gao, Yuyang and Gu, Siyi and Jiang, Junji and Hong, Sungsoo Ray and Yu, Dazhou and Zhao, Liang},
  journal={ACM Computing Surveys},
  year={2024},
}

@inproceedings{terra,
  title={Adversarial domain adaptation with domain mixup},
  author={Xu, Minghao and Zhang, Jian and Ni, Bingbing and Li, Teng and Wang, Chengjie and Tian, Qi and Zhang, Wenjun},
  booktitle={AAAI},
  year={2020}
}

@inproceedings{VLCS,
  title={Unbiased metric learning: On the utilization of multiple datasets and web images for softening bias},
  author={Fang, Chen and Xu, Ye and Rockmore, Daniel N},
  booktitle={ICCV},
  year={2013}
}

@inproceedings{he2016resnet,
  title={Deep residual learning for image recognition},
  author={He, Kaiming and Zhang, Xiangyu and Ren, Shaoqing and Sun, Jian},
  booktitle={Proceedings of the IEEE conference on computer vision and pattern recognition},
  pages={770--778},
  year={2016}
}

@inproceedings{selvaraju2019hint,
  title={Taking a hint: Leveraging explanations to make vision and language models more grounded},
  author={Selvaraju, Ramprasaath R and Lee, Stefan and Shen, Yilin and Jin, Hongxia and Ghosh, Shalini and Heck, Larry and Batra, Dhruv and Parikh, Devi},
  booktitle={Proceedings of the IEEE/CVF international conference on computer vision},
  pages={2591--2600},
  year={2019}
}

@article{goodfellow2014fgsm,
  title={Explaining and harnessing adversarial examples},
  author={Goodfellow, Ian J and Shlens, Jonathon and Szegedy, Christian},
  journal={arXiv preprint arXiv:1412.6572},
  year={2014}
}

@article{madry2017pgd,
  title={Towards deep learning models resistant to adversarial attacks},
  author={Madry, Aleksander and Makelov, Aleksandar and Schmidt, Ludwig and Tsipras, Dimitris and Vladu, Adrian},
  journal={arXiv preprint arXiv:1706.06083},
  year={2017}
}

@inproceedings{ghorbani2019interpretation,
  title={Interpretation of neural networks is fragile},
  author={Ghorbani, Amirata and Abid, Abubakar and Zou, James},
  booktitle={Proceedings of the AAAI conference on artificial intelligence},
  volume={33},
  number={01},
  pages={3681--3688},
  year={2019}
}

@article{chen2024training,
  title={Training for stable explanation for free},
  author={Chen, Chao and Guo, Chenghua and Chen, Rufeng and Ma, Guixiang and Zeng, Ming and Liao, Xiangwen and Zhang, Xi and Xie, Sihong},
  journal={Advances in Neural Information Processing Systems},
  volume={37},
  pages={3421--3457},
  year={2024}
}

@article{chen2019ignorm,
  title={Robust attribution regularization},
  author={Chen, Jiefeng and Wu, Xi and Rastogi, Vaibhav and Liang, Yingyu and Jha, Somesh},
  journal={Advances in Neural Information Processing Systems},
  volume={32},
  year={2019}
}

@inproceedings{sarkar2021enhanced,
  title={Enhanced regularizers for attributional robustness},
  author={Sarkar, Anindya and Sarkar, Anirban and Balasubramanian, Vineeth N},
  booktitle={Proceedings of the AAAI Conference on Artificial Intelligence},
  volume={35},
  number={3},
  pages={2532--2540},
  year={2021}
}

@article{wicker2022robust,
  title={Robust explanation constraints for neural networks},
  author={Wicker, Matthew and Heo, Juyeon and Costabello, Luca and Weller, Adrian},
  journal={arXiv preprint arXiv:2212.08507},
  year={2022}
}

@inproceedings{slack2020fooling,
  title={Fooling lime and shap: Adversarial attacks on post hoc explanation methods},
  author={Slack, Dylan and Hilgard, Sophie and Jia, Emily and Singh, Sameer and Lakkaraju, Himabindu},
  booktitle={Proceedings of the AAAI/ACM Conference on AI, Ethics, and Society},
  pages={180--186},
  year={2020}
}

@article{tsipras2018robustness,
  title={Robustness may be at odds with accuracy},
  author={Tsipras, Dimitris and Santurkar, Shibani and Engstrom, Logan and Turner, Alexander and Madry, Aleksander},
  journal={arXiv preprint arXiv:1805.12152},
  year={2018}
}

@article{selvaraju2020gradcam,
  title={Grad-CAM: visual explanations from deep networks via gradient-based localization},
  author={Selvaraju, Ramprasaath R and Cogswell, Michael and Das, Abhishek and Vedantam, Ramakrishna and Parikh, Devi and Batra, Dhruv},
  journal={International journal of computer vision},
  volume={128},
  pages={336--359},
  year={2020},
  publisher={Springer}
}

@inproceedings{zhang2023magi,
  title={Magi: Multi-annotated explanation-guided learning},
  author={Zhang, Yifei and Gu, Siyi and Gao, Yuyang and Pan, Bo and Yang, Xiaofeng and Zhao, Liang},
  booktitle={Proceedings of the IEEE/CVF International Conference on Computer Vision},
  pages={1977--1987},
  year={2023}
}

@inproceedings{zhuang2019care,
  title={Care: Class attention to regions of lesion for classification on imbalanced data},
  author={Zhuang, Jiaxin and Cai, Jiabin and Wang, Ruixuan and Zhang, Jianguo and Zheng, Weishi},
  booktitle={International Conference on Medical Imaging with Deep Learning},
  pages={588--597},
  year={2019},
  organization={PMLR}
}

@inproceedings{guo2019mixup,
  title={Mixup as locally linear out-of-manifold regularization},
  author={Guo, Hongyu and Mao, Yongyi and Zhang, Richong},
  booktitle={Proceedings of the AAAI conference on artificial intelligence},
  volume={33},
  number={01},
  pages={3714--3722},
  year={2019}
}

@inproceedings{xu2020dmada,
  title={Adversarial domain adaptation with domain mixup},
  author={Xu, Minghao and Zhang, Jian and Ni, Bingbing and Li, Teng and Wang, Chengjie and Tian, Qi and Zhang, Wenjun},
  booktitle={Proceedings of the AAAI conference on artificial intelligence},
  volume={34},
  number={04},
  pages={6502--6509},
  year={2020}
}

@article{arjovsky2019irm,
  title={Invariant risk minimization},
  author={Arjovsky, Martin and Bottou, L{\'e}on and Gulrajani, Ishaan and Lopez-Paz, David},
  journal={arXiv preprint arXiv:1907.02893},
  year={2019}
}

@article{viallard2021pac,
  title={A pac-bayes analysis of adversarial robustness},
  author={Viallard, Paul and VIDOT, Eric Guillaume and Habrard, Amaury and Morvant, Emilie},
  journal={Advances in Neural Information Processing Systems},
  volume={34},
  pages={14421--14433},
  year={2021}
}

@article{xiao2022adversarial,
  title={Adversarial rademacher complexity of deep neural networks},
  author={Xiao, Jiancong and Fan, Yanbo and Sun, Ruoyu and Luo, Zhi-Quan},
  journal={arXiv preprint arXiv:2211.14966},
  year={2022}
}

@inproceedings{tan2023robust,
  title={Robust explanation for free or at the cost of faithfulness},
  author={Tan, Zeren and Tian, Yang},
  booktitle={International conference on machine learning},
  pages={33534--33562},
  year={2023},
  organization={PMLR}
}

@article{bartlett2002rademacher,
  title={Rademacher and gaussian complexities: Risk bounds and structural results},
  author={Bartlett, Peter L and Mendelson, Shahar},
  journal={Journal of Machine Learning Research},
  volume={3},
  number={Nov},
  pages={463--482},
  year={2002}
}

@article{bhusaltowards,
  title={Towards improving saliency map interpretability using feature map smoothing},
  author={Bhusal, Dipkamal and Alam, Md Tanvirul and manikya Veerabhadran, Monish Kumar and Clifford, Michael and Rampazzi, Sara and Rastogi, Nidhi},
  year={2025}
}

@inproceedings{dong2018mifgsm,
  title={Boosting adversarial attacks with momentum},
  author={Dong, Yinpeng and Liao, Fangzhou and Pang, Tianyu and Su, Hang and Zhu, Jun and Hu, Xiaolin and Li, Jianguo},
  booktitle={Proceedings of the IEEE conference on computer vision and pattern recognition},
  pages={9185--9193},
  year={2018}
}

@article{shi2024npat,
  title={Unsupervised Class-Imbalanced Domain Adaptation With Pairwise Adversarial Training and Semantic Alignment},
  author={Shi, Weili and Zhu, Ronghang and Li, Sheng},
  journal={IEEE Transactions on Circuits and Systems for Video Technology},
  year={2024},
  publisher={IEEE}
}

@article{wu2025aft,
  title={Adversarial Feature Training for Few-shot Object Detection},
  author={Wu, Tianxu and Xin, Zhimeng and Chen, Shiming and Zou, Yixiong and You, Xinge},
  journal={IEEE Transactions on Circuits and Systems for Video Technology},
  year={2025},
  publisher={IEEE}
}

@article{li2024towards,
  title={Towards Explainable Image Aesthetics Assessment with Attribute-oriented Critiques Generation},
  author={Li, Leida and Sheng, Xiangfei and Chen, Pengfei and Wu, Jinjian and Dong, Weisheng},
  journal={IEEE Transactions on Circuits and Systems for Video Technology},
  year={2024},
  publisher={IEEE}
}

@article{zhao2022toward,
  title={Toward explainable 3D grounded visual question answering: A new benchmark and strong baseline},
  author={Zhao, Lichen and Cai, Daigang and Zhang, Jing and Sheng, Lu and Xu, Dong and Zheng, Rui and Zhao, Yinjie and Wang, Lipeng and Fan, Xibo},
  journal={IEEE Transactions on Circuits and Systems for Video Technology},
  volume={33},
  number={6},
  pages={2935--2949},
  year={2022},
  publisher={IEEE}
}

@article{li2022adal,
  title={Artifacts-disentangled adversarial learning for deepfake detection},
  author={Li, Xin and Ni, Rongrong and Yang, Pengpeng and Fu, Zhiqiang and Zhao, Yao},
  journal={IEEE Transactions on Circuits and Systems for Video Technology},
  volume={33},
  number={4},
  pages={1658--1670},
  year={2022},
  publisher={IEEE}
}

@article{liu2024xfmp,
  title={XFMP: A Benchmark for Explainable Fine-grained Abnormal Behavior Recognition on Medical Personal Protective Equipment},
  author={Liu, Jiaxi and Niu, Jinghao and Li, Weifeng and Li, Xin and He, Binbin and Zhou, Hao and Liu, Yanjuan and Li, Ding and Wang, Bo and Zhang, Wensheng},
  journal={IEEE Transactions on Circuits and Systems for Video Technology},
  year={2024},
  publisher={IEEE}
}

@inproceedings{shen2021iai,
  title={Human-AI interactive and continuous sensemaking: A case study of image classification using scribble attention maps},
  author={Shen, Haifeng and Liao, Kewen and Liao, Zhibin and Doornberg, Job and Qiao, Maoying and Van Den Hengel, Anton and Verjans, Johan W},
  booktitle={extended abstracts of the 2021 CHI conference on human factors in computing systems},
  pages={1--8},
  year={2021}
}

@article{adebayo2018sanitychecks,
  title={Sanity checks for saliency maps},
  author={Adebayo, Julius and Gilmer, Justin and Muelly, Michael and Goodfellow, Ian and Hardt, Moritz and Kim, Been},
  journal={Advances in neural information processing systems},
  volume={31},
  year={2018}
}

@article{Cheng2025ExplainAttack3D,
  title={Black-box explainability-guided adversarial attack for 3D object tracking},
  author={Cheng, Riran and Wang, Xupeng and Sohel, Ferdous and Lei, Hang},
  journal={IEEE transactions on circuits and systems for video technology},
  year={2025},
  publisher={IEEE}
}

@article{Sun2024TargetedAttacksOD,
  title={Task-specific importance-awareness matters: On targeted attacks against object detection},
  author={Sun, Xuxiang and Cheng, Gong and Li, Hongda and Peng, Hongyu and Han, Junwei},
  journal={IEEE Transactions on Circuits and Systems for Video Technology},
  year={2024},
  publisher={IEEE}
}

@article{Fu2024AttackInvariance,
  title={Remove to Regenerate: Boosting Adversarial Generalization With Attack Invariance},
  author={Fu, Xiaowei and Ma, Lina and Zhang, Lei},
  journal={IEEE Transactions on Circuits and Systems for Video Technology},
  year={2024},
  publisher={IEEE}
}

@article{Chen2023RobustNIC,
  title={Toward robust neural image compression: Adversarial attack and model finetuning},
  author={Chen, Tong and Ma, Zhan},
  journal={IEEE Transactions on Circuits and Systems for Video Technology},
  volume={33},
  number={12},
  pages={7842--7856},
  year={2023},
  publisher={IEEE}
}

@article{smilkov2017smoothgrad,
  title={Smoothgrad: removing noise by adding noise},
  author={Smilkov, Daniel and Thorat, Nikhil and Kim, Been and Vi{\'e}gas, Fernanda and Wattenberg, Martin},
  journal={arXiv preprint arXiv:1706.03825},
  year={2017}
}

@inproceedings{sundararajan2017ig,
  title={Axiomatic attribution for deep networks},
  author={Sundararajan, Mukund and Taly, Ankur and Yan, Qiqi},
  booktitle={International conference on machine learning},
  pages={3319--3328},
  year={2017},
  organization={PMLR}
}

@inproceedings{zhang2025finercam,
  title={Finer-cam: Spotting the difference reveals finer details for visual explanation},
  author={Zhang, Ziheng and Gu, Jianyang and Chowdhury, Arpita and Mai, Zheda and Carlyn, David and Berger-Wolf, Tanya and Su, Yu and Chao, Wei-Lun},
  booktitle={Proceedings of the Computer Vision and Pattern Recognition Conference},
  pages={9611--9620},
  year={2025}
}

@inproceedings{sandler2018mobilenetv2,
  title={Mobilenetv2: Inverted residuals and linear bottlenecks},
  author={Sandler, Mark and Howard, Andrew and Zhu, Menglong and Zhmoginov, Andrey and Chen, Liang-Chieh},
  booktitle={Proceedings of the IEEE conference on computer vision and pattern recognition},
  pages={4510--4520},
  year={2018}
}

@inproceedings{tan2019efficientnet,
  title={Efficientnet: Rethinking model scaling for convolutional neural networks},
  author={Tan, Mingxing and Le, Quoc},
  booktitle={International conference on machine learning},
  pages={6105--6114},
  year={2019},
  organization={PMLR}
}

@article{pan2024sca,
  title={SCA: Improve Semantic Consistent in Unrestricted Adversarial Attacks via DDPM Inversion},
  author={Pan, Zihao and Chen, Lifeng and Wu, Weibin and Cao, Yuhang and Zheng, Zibin},
  journal={arXiv preprint arXiv:2410.02240},
  year={2024}
}

@inproceedings{ribeiro2016lime,
  title={" Why should i trust you?" Explaining the predictions of any classifier},
  author={Ribeiro, Marco Tulio and Singh, Sameer and Guestrin, Carlos},
  booktitle={Proceedings of the 22nd ACM SIGKDD international conference on knowledge discovery and data mining},
  pages={1135--1144},
  year={2016}
}

@inproceedings{goyal2019counterfactual,
  title={Counterfactual visual explanations},
  author={Goyal, Yash and Wu, Ziyan and Ernst, Jan and Batra, Dhruv and Parikh, Devi and Lee, Stefan},
  booktitle={International Conference on Machine Learning},
  pages={2376--2384},
  year={2019},
  organization={PMLR}
}

@inproceedings{patro2019u,
  title={U-cam: Visual explanation using uncertainty based class activation maps},
  author={Patro, Badri N and Lunayach, Mayank and Patel, Shivansh and Namboodiri, Vinay P},
  booktitle={Proceedings of the IEEE/CVF International Conference on Computer Vision},
  pages={7444--7453},
  year={2019}
}

\end{document}